\documentclass{article} 
\usepackage{iclr2026_conference,times}

\usepackage[utf8]{inputenc} 
\usepackage[T1]{fontenc}    
\usepackage[backref=page]{hyperref}       
\usepackage{url}            
\usepackage{booktabs}       
\usepackage{amsfonts}       
\usepackage{nicefrac}       
\usepackage{microtype}      
\usepackage{xcolor}         
\usepackage{amsmath}
\usepackage{amsthm}
\usepackage{amssymb}
\usepackage{mathtools}
\usepackage{mathrsfs}
\usepackage{graphicx}
\usepackage{wrapfig}
\usepackage{tikz}
\usetikzlibrary{positioning,arrows.meta}
\usepackage{caption}
\usepackage{float}
\usepackage{subcaption}
\usepackage{geometry}
\usepackage{multirow}
\usepackage{adjustbox}
\usepackage{enumitem}

\usepackage{algorithm}
\usepackage{algorithmic}
\usepackage{listings}

\usepackage{tcolorbox}
\tcbset{
  algorithmbox/.style={
    sharp corners,
    boxrule=0.8pt,
    left=2mm, right=2mm, top=1mm, bottom=1mm,
    fonttitle=\bfseries,
    colback=#1!10!white,
    colframe=#1!70!black
  }
}












\def\eqref#1{equation~\ref{#1}}









\def\1{\bm{1}}










\DeclareMathAlphabet{\mathsfit}{\encodingdefault}{\sfdefault}{m}{sl}
\SetMathAlphabet{\mathsfit}{bold}{\encodingdefault}{\sfdefault}{bx}{n}











\newcommand{\E}{\mathbb{E}}



\DeclareMathOperator*{\argmax}{arg\,max}
\DeclareMathOperator*{\argmin}{arg\,min}



\newtheorem{theorem}{Theorem}[section]
\newtheorem{lemma}{Lemma}[section]
\newtheorem{corollary}{Corollary}[section]
\newtheorem{proposition}{Proposition}[section]



\definecolor{simHigh}{HTML}{FFFFFF} 
\definecolor{simMed}{HTML}{B4D7FF}  
\definecolor{simLow}{HTML}{79B8FF}  
\definecolor{simMin}{HTML}{377FF2}  


\title{Evaluation-Aware Reinforcement Learning}

\usepackage{authblk}

\author{Shripad Vilasrao Deshmukh \quad Will Schwarzer \quad Scott Niekum}
\affil{University of Massachusetts Amherst \\ \texttt{\{svdeshmukh, wschwarzer, sniekum\}@cs.umass.edu}}

\iclrfinalcopy 
\begin{document}

\maketitle


\begin{abstract}
    Policy evaluation is a core component of many reinforcement learning (RL) algorithms and a critical tool for ensuring safe deployment of RL policies. However, existing policy evaluation methods often suffer from high variance or bias. To address these issues, we introduce Evaluation-Aware Reinforcement Learning (EvA-RL), a general policy learning framework that considers evaluation accuracy at train-time, as opposed to standard post-hoc policy evaluation methods. Specifically, EvA-RL directly optimizes policies for efficient and accurate evaluation, in addition to being performant. We provide an instantiation of EvA-RL and demonstrate through a combination of theoretical analysis and empirical results that EvA-RL effectively trades off between evaluation accuracy and expected return. Finally, we show that the evaluation-aware policy and the evaluation mechanism itself can be co-learned to mitigate this tradeoff, providing the evaluation benefits without significantly sacrificing policy performance. This work opens a new line of research that elevates reliable evaluation to a first-class principle in reinforcement learning.
\end{abstract}



\section{Introduction}
\label{sec:intro}

Recent years have seen a surge in the application of reinforcement learning (RL)~\citep{SuttonBarto1998} to real-world tasks, including industrial control~\citep{mirhoseini2020chip}, robotics~\citep{Kalashnikov2018QTOpt}, and healthcare~\citep{Fox2020ClosedLoopBG}. In such settings, reliable policy evaluation---accurately estimating the expected performance of a policy across states before deployment---is essential for safe and informed decision-making~\citep{DulacArnold2019ChallengesRealWorldRL}. Beyond its role in deployment decisions, policy evaluation is also a central component of many policy learning algorithms~\citep{SuttonBarto1998,hcpi}, making it an important subtopic of RL research in its own right.

Existing policy evaluation methods fall into two broad classes: on-policy and off-policy~\citep{ThomasTheocharousGhavamzadeh2015HCOPE}. On-policy estimation requires direct interaction with the environment, which can be prohibitively costly. Off-policy evaluation (OPE) methods sidestep this cost by leveraging data collected under policies that can be different from the evaluation policy. However, OPE methods face their own challenges: importance-sampling estimators~\citep{PrecupSuttonDasgupta2001OffPolicyTD} typically suffer from high variance as the task horizon grows~\citep{Liu2018CurseOfHorizon}, while direct methods~\citep{fonteneau2013batch} often incur high bias when the off-policy data provides insufficient coverage of the states and actions visited by the evaluation policy. In practice, limited data, poor coverage, and inaccurate environment models frequently lead to poor evaluation.

To address these challenges, we propose to incorporate evaluation accuracy as a training-time objective. Rather than treating evaluation as a post-hoc step applied after a policy has been learned, we instead learn policies that not only maximize performance, but also enable accurate evaluation under a given evaluation scheme. We call this approach \textit{Evaluation-Aware Reinforcement Learning} (EvA-RL). The central premise is that by optimizing for \textit{evaluability}---the property of incurring low evaluation error under a specified estimator---during training, the expected gap between a policy's true and estimated performance is explicitly minimized, rather than left to chance.

We first formalize the general EvA-RL objective, which balances performance maximization with evaluation error minimization under a given evaluation scheme. To enable tractable train-time evaluation, we instantiate EvA-RL with an evaluation scheme that estimates a policy's state-values conditioned on its behavior in a small set of scenarios. This is akin to a driver's test that gauges general driving ability from performance on a small set of scenarios. We formally refer to the environment where such an assessment takes place as an `assessment environment', designed to elicit value-informative behavior efficiently. We employ a parameterized value evaluator to predict the state-values of a policy conditioned on its assessment behavior.

Our theoretical analysis reveals that, when the evaluation scheme is held fixed, a fundamental tradeoff can emerge between policy performance and evaluation accuracy---enforcing tighter evaluation accuracy leads to a final return that is at best equal to, and in general lower than, that of a policy trained with a more permissive evaluation-error tolerance. We then show that this tradeoff can be mitigated by co-learning the value evaluator alongside the policy, partially shifting the burden of evaluation error minimization onto the evaluator and easing the constraints on policy optimization. Experiments across diverse discrete and continuous control domains demonstrate that EvA-RL consistently reduces evaluation error while maintaining competitive returns.

Taken together, this work opens a new line of RL research that treats evaluation accuracy as a first-class objective alongside performance maximization, enabling the development of policies that are both performant and efficiently evaluable.
\section{Background and Related Work}

\noindent\textbf{Markov decision process (MDP).} 
We work in the setting of a finite horizon MDP~\citep{puterman2014markov}, defined by a tuple $(\mathcal{S}, \mathcal{A}, p, r, \gamma, \mu)$. Here, $\mathcal{S}$ denotes the set of all possible states and $\mathcal{A}$ denotes the set of all available actions. The transition function $p: \mathcal{S} \times \mathcal{A} \xrightarrow{} \Delta(\mathcal{S})$ defines the distribution over next states given a state-action pair, and the reward function $r: \mathcal{S} \times \mathcal{A} \xrightarrow{} \mathbb{R}$ defines the expected reward $r(s, a)$ for taking action $a$ in state $s$. The discount factor is $\gamma \in [0, 1)$, and $\mu \in \Delta(\mathcal{S})$ defines the initial state distribution. A trajectory $H$ of horizon $T < \infty$ is defined as $(S_0, A_0, R_0, S_1, A_1, R_1, \dots, S_T)$, with discounted return \mbox{$G(H) = \sum_{t=0}^{T-1}\gamma^t r(S_t, A_t)$}. A stochastic policy $\pi: \mathcal{S} \xrightarrow{} \Delta(\mathcal{A})$ defines a distribution over actions conditioned on the current state. The state-value function \mbox{$V_\pi(s) = \mathbb{E}_{H \sim \pi}[G(H)| S_0 = s]$} represents the expected return under $\pi$ starting at state $s$, while the state-action value function \mbox{$Q_\pi(s,a) = \mathbb{E}_{H \sim \pi}[G(H)| S_0 = s, A_0 = a]$} conditions additionally on the first action. The aggregate performance of policy $\pi$ is \mbox{$J_\pi = \mathbb{E}_{s \sim \mu}V_\pi(s)$}.

\noindent\textbf{Policy evaluation.} In this work, the term policy evaluation is used to indicate the estimation of the state-value function of the policy under evaluation. The term is also sometimes used to refer to estimating the aggregate performance $J_\pi$; given an estimate of $V_\pi$, this can be obtained by averaging over states sampled from the initial state distribution $\mu$. The simplest approach to policy evaluation is Monte Carlo (MC) estimation, which draws $N$ trajectory samples under $\pi$ starting at $s$ and averages their returns: \mbox{$V_\pi^{\text{MC}}(s) = \frac{1}{N}\sum_{i=1}^{N}G(H_i | S_0 = s)$}. While unbiased, producing reliable MC estimates can be prohibitively costly due to the required environment interactions.

\noindent\textbf{Off-policy evaluation (OPE).} OPE methods address this cost by reusing data collected under a different \emph{behavior policy} $\pi_b$, avoiding the need to deploy $\pi$ directly~\citep{Uehara2022OPEReview}. The most straightforward OPE approach is trajectory importance sampling (TIS), which reweighs each trajectory return $G(H)$ by the importance ratio \mbox{$w(H) = \prod_{t=0}^{T-1}\frac{\pi(A_t \mid S_t)}{\pi_b(A_t \mid S_t)}$}, yielding: \mbox{$V_\pi^{\text{TIS}}(s) = \frac{1}{N}\sum_{i=1}^{N}w(H_i)G(H_i | S_0 = s)$}. However, TIS suffers from potentially exponential variance in the importance weights. Several estimators have been proposed to address this. Per-decision importance sampling (PDIS)~\citep{PrecupSuttonDasgupta2001OffPolicyTD} applies importance weights incrementally at each time step, generally reducing variance relative to TIS but still struggling in long-horizon tasks. Fitted Q-evaluation (FQE)~\citep{fonteneau2013batch,le2019batch}, also known as `direct method', avoids importance weighting entirely by learning a Q-function via Bellman consistency on off-policy data, but can suffer from high bias under poor state-action coverage. Doubly robust (DR) estimators~\citep{JiangLi2016DoublyRobustRL, ThomasBrunskill2016DataEfficientOPE} combine importance sampling with learned value functions, reducing variance while remaining consistent when either component is accurate.

In our empirical analyses, we compare policies trained traditionally and evaluated using the above OPE methods (TIS, PDIS, FQE, and DR) against policies trained under our proposed evaluation-aware framework.

\noindent\textbf{Behavior policy search.}
A complementary line of work by~\citet{Hanna2017BehaviorPolicySearch} optimizes the data collection strategy: given an evaluation policy $\pi$, they search for a behavior policy $\pi_b$ that, when deployed to gather new trajectories, minimizes the mean-squared error of the resulting IS estimates. Our work addresses the converse problem -- rather than optimizing how data is collected, we search for an evaluation policy $\pi$ that is both performant and accurately evaluable from existing assessment data.

\section{EvA-RL: Evaluation-Aware Reinforcement Learning}\label{sec:evarl_method}

In this section, we formally introduce evaluation-aware reinforcement learning (EvA-RL) and present a practical approach for evaluation-aware learning that achieves low evaluation error and high aggregate performance.


\noindent{\textbf{Optimization problem:}} Consider an evaluation scheme that, given a state, a policy, and data $\mathcal{D}$ collected from the environment, outputs a real-valued estimate $\hat{V}(s;\, \pi,\, \mathcal{D})$ of the policy's state-value. The quality of this evaluation scheme for a given policy $\pi$ is measured by the mean squared error (MSE): $\mathcal{L}_\mathrm{MSE}(\pi, \hat{V}) = \mathbb{E}_{s \sim \mu}\Big[\big(V_\pi(s) - \hat{V}(s;\, \pi,\, \mathcal{D})\big)^2\Big]$. We refer to the property of a policy to incur low MSE w.r.t.\ a given evaluation scheme as its \textit{evaluability}. We formulate evaluation-aware policy learning as the following optimization problem:
\begin{equation}
    \label{eq:value_predictability_objective_general}
    \mathrm{arg}\max_{\pi} \; \Bigg[\underbrace{\mathbb{E}_{s \sim \mu}\big[\mathbb{E}_{H \sim \pi}[G(H) \mid S_0 = s]\big]}_{J_\pi} - \beta\; \underbrace{\mathbb{E}_{s \sim \mu}\Big[\big(\mathbb{E}_{H \sim \pi}[G(H) \mid S_0 = s] - \hat{V}(s;\, \pi,\, \mathcal{D})\big)^2\Big]}_{\mathcal{L}_\mathrm{MSE}(\pi,\, \hat{V})}\Bigg]
\end{equation}
where $\beta > 0$ is the \textit{evaluability coefficient} that controls the tradeoff between improving aggregate performance and improving evaluability. 

\noindent\textbf{Practical challenge:} We consider an on-policy approach to optimizing the EvA-RL objective (Eq.~\ref{eq:value_predictability_objective_general}), which requires evaluating each candidate policy during training. Since the evaluation scheme must be queried repeatedly as we search for a policy that improves both performance and evaluability, computational and statistical efficiency is critical. This rules out standard OPE techniques: importance sampling methods require processing large amounts of off-policy data, and direct methods such as FQE~\citep{le2019batch} require solving for $Q_\pi^{\text{FQE}}$ separately for every candidate policy $\pi$. Moreover, while Monte Carlo return samples are naturally collected at each on-policy update and provide unbiased estimates of state-values, they are high-variance. Instead, our approach is to gather a small amount of policy behavior in a low-cost \textit{assessment environment} and use it to predict the state-values. Concretely, we roll out the candidate policy $\pi$ in the assessment environment to collect trajectories, and then produce state-value estimates by conditioning on the observed behavior. This makes the auxiliary data $\mathcal{D}$ used by our estimator to compute $\hat{V}(s; \pi, \mathcal{D})$ cheap to collect for any candidate policy, enabling efficient in-the-loop evaluation.

\noindent\textbf{Assessment environment:} For the rest of the paper, we make a distinction: the \textit{deployment environment} refers to the environment where the policy would ultimately be deployed, and the \textit{assessment environment} refers to an environment in which value-informative behavior can be observed more easily, cheaply, or safely than in the deployment environment.

The assessment environment can relate to the deployment environment in various ways along a spectrum. At one extreme, it may be the deployment environment itself but restricted to a curated set of initial states or shorter time horizons. It may also share the same state-action spaces but simplify the dynamics, for example by reducing stochasticity or the number of interacting agents. At the other extreme, it may be an entirely separate environment, such as a simulation, that approximates the deployment setting. Such assessment environments naturally emerge in many real-world control tasks---simulators such as CARLA~\citep{Dosovitskiy2017CARLA} for autonomous driving and ODE-based models~\citep{man2014uva} for diabetes management are routinely used during development and can serve this role. Additionally, a domain expert can design a curated set of scenarios within the deployment environment to probe agent behavior, and these can serve as an assessment environment.


\noindent{\textbf{Formalism:}} We now formalize the setup introduced above. In the MDP notation defined earlier, the deployment environment is written as $\mathcal{M}_D = (\mathcal{S}_D, \mathcal{A}_D, p_D, r_D, \gamma_D, \mu_D)$ and the assessment environment as $\mathcal{M}_A = (\mathcal{S}_A, \mathcal{A}_A, p_A, r_A, \gamma_A, \mu_A)$. We assume that compatibility conditions are satisfied so that the policy $\pi$ can operate in both $\mathcal{M}_D$ and $\mathcal{M}_A$ (see Appendix~\ref{sec:compatibility_conditions}). As a distinguishing feature, we consider the assessment environment to have $k$ distinct start states, allowing us to gather behavioral information across $k$ distinct scenarios. Formally, $\mu_A(s) = 1/k$ if $s \in \{s^1, s^2, \dots, s^k\} \subset \mathcal{S}_A$ and $0$ otherwise. We denote the state-value function of $\pi$ in the deployment environment by $V_D^\pi$ and in the assessment environment by $V_A^\pi$. We denote by $\Xi_A$ the behavioral information gathered by executing $\pi$ in $\mathcal{M}_A$ across the $k$ assessment scenarios. This may include full trajectories or summary statistics of
the policy's behavior, such as state-values or risk measures. Our goal is to estimate $V_D^\pi$ by conditioning on $\Xi_A$. We denote this estimate for a deployment state $s \in \mathcal{S}_D$ by $\hat{V}_D^\pi(s \mid \Xi_A)$. This is a concrete instantiation of the general estimator $\hat{V}(s;\, \pi,\, \mathcal{D})$ introduced earlier, where the auxiliary data $\mathcal{D}$ is the assessment behavior $\Xi_A$.


Substituting the assessment-conditioned evaluation scheme into the general objective (Eq.~\ref{eq:value_predictability_objective_general}) and making the deployment environment explicit, the EvA-RL optimization problem becomes:
\begin{equation}
    \label{eq:value_predictability_objective}
    \mathrm{arg}\max_{\pi} \; \Bigg[\underbrace{\mathbb{E}_{s \sim \mu_D;\, H \sim \pi \mid s,\, \mathcal{M}_D}[G(H)]}_{\text{performance}} - \beta\; \underbrace{\mathbb{E}_{s \sim \mu_D}\Big[\big(\mathbb{E}_{H \sim \pi\mid \mathcal{M}_D}[G(H) \mid S_0 = s] -  \hat{V}_D^{\pi}(s \mid \Xi_A)\big)^2\Big]}_{\text{evaluability}}\Bigg]
\end{equation}
While the above objective can in principle be optimized using various approaches (e.g., value-based methods), in this paper we adopt a policy gradient approach and approximate $V_D^{\pi}(s) = \mathbb{E}_{H \sim \pi\mid \mathcal{M}_D}[G(H) \mid S_0 = s]$ using Monte Carlo returns from trajectories sampled in $\mathcal{M}_D$, following common practice in methods such as A2C~\citep{KondaTsitsiklis1999ActorCritic} and TRPO~\citep{Schulman2015TRPO}.

\subsection{Performance-evaluability tradeoff with a fixed evaluator}
\label{sec:theoretical_insights}

The EvA-RL objective (Eq.~\ref{eq:value_predictability_objective}) jointly optimizes for performance and evaluability. A natural question is whether both objectives can be improved simultaneously. We analyze this question in the setting where the evaluation scheme is held fixed throughout policy optimization. The following result shows that, in this setting, the expected return under stronger evaluability enforcement is non-increasing.

\begin{proposition}\label{thm:monotonicity_of_pred_gap_returns}
    Increasing $\beta$ in the EvA-RL objective (Eq.~\ref{eq:value_predictability_objective}) with a fixed evaluation scheme leads to non-increasing evaluation error $\mathcal{L}_\mathrm{MSE}$ and non-increasing expected return $J_\pi$ of the resulting policy (proof in Appendix~\ref{sec:monotonicity_pred_gap_returns}).
\end{proposition}

This result requires only the structure of maximizing $f(\pi) - \beta\, g(\pi)$ over policies and therefore applies to any fixed evaluation scheme. It establishes that  enforcing tighter evaluability leads to an expected return that is at best equal to, and in general lower than, that of a policy trained with a more permissive evaluation-error tolerance. This motivates co-learning the evaluation scheme alongside the policy, as we propose in the next section.

\subsection{EvA-RL with a co-learned value evaluator}\label{sec:practical_evarl}






The tradeoff identified in Proposition~\ref{thm:monotonicity_of_pred_gap_returns} arises because the policy alone must bear the burden of evaluability under a fixed evaluation scheme. A natural remedy is to allow the value evaluator itself to adapt alongside the policy, so that the burden of reducing evaluation error is shared. We instantiate this idea by parameterizing the value evaluator as a transformer~\citep{Vaswani2017Attention} with parameters $\phi$ that takes as input a query state $s \in \mathcal{S}_D$ along with assessment behavior $\Xi_A$, and outputs an estimate $\hat{V}_D^{\pi, \phi}(s \mid \Xi_A)$ of $V_D^{\pi}(s)$. In practice, we found that conditioning on the assessment start-states and their corresponding Monte Carlo returns was sufficient, specializing $\Xi_A = \{(s^i, G(H^i)) \mid H^i \sim \pi;\; s^i;\; \mathcal{M}_A\}_{i=1}^{k}$, where $G(H^i)$ denotes the discounted return of trajectory $H^i$. Figure~\ref{fig:predictability_transformer} illustrates the architecture; implementation details are in Appendix~\ref{sec:appendix_pred_transformer}.

\begin{figure}[h]
    \centering
    \includegraphics[width=0.55\textwidth]{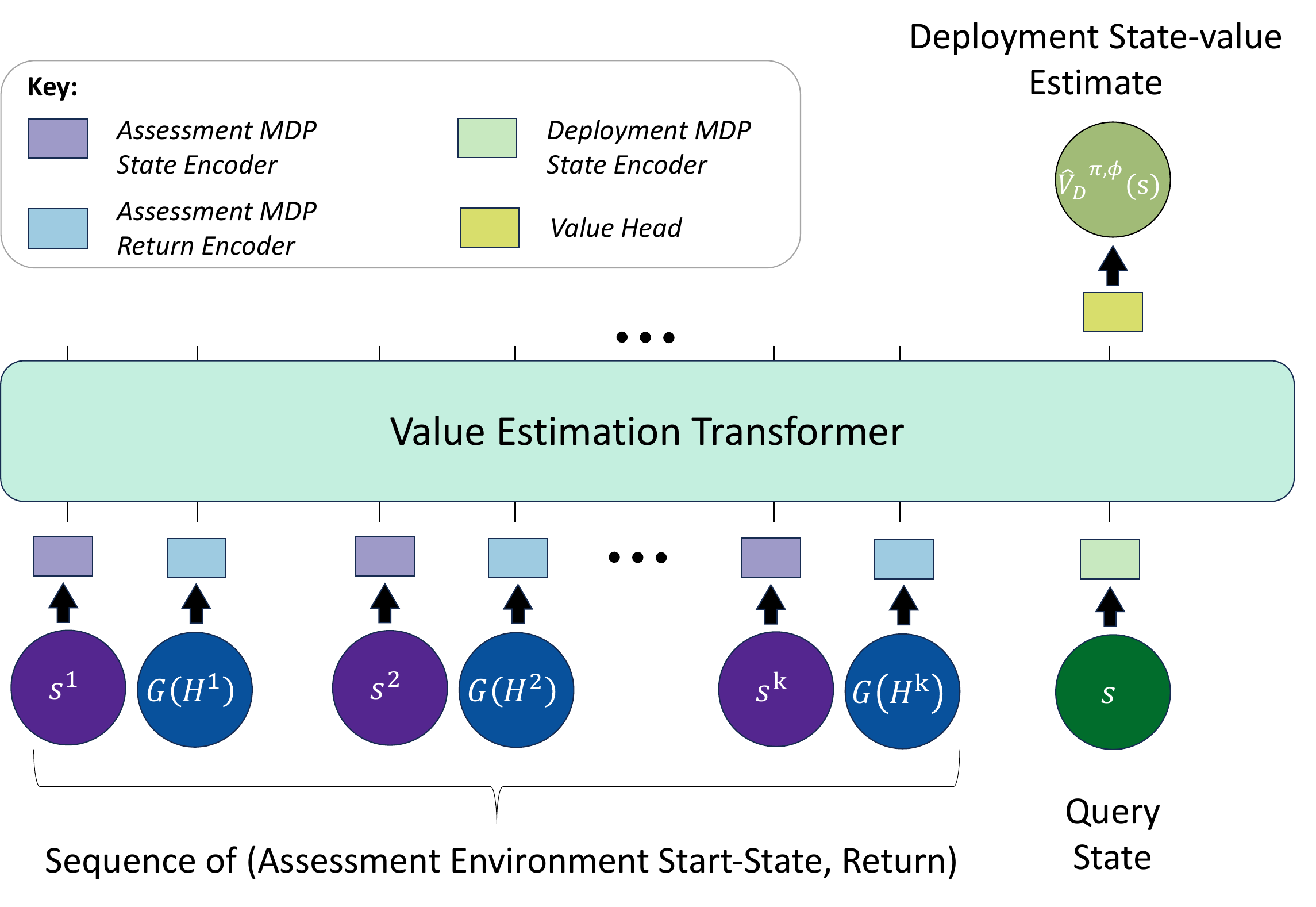}
    \caption{\textbf{Value estimation transformer.} The value evaluator $\hat{V}_D^{\pi, \phi}$ is a transformer encoder that takes as input the assessment start-states $\{s^1, s^2, \ldots, s^k\}$, the corresponding returns $\{G(H^1), G(H^2), \ldots, G(H^k)\}$ from rolling out policy $\pi$ in $\mathcal{M}_A$, and a query state $s \in \mathcal{S}_D$. It outputs an estimate of the query state's value $\hat{V}_D^{\pi, \phi}(s \mid \Xi_A)$ in the deployment environment. Implementation details are in Appendix~\ref{sec:appendix_pred_transformer}.}
    \label{fig:predictability_transformer}
\end{figure}

Since both the policy $\pi_\theta$ and the evaluator are initialized randomly, the evaluator cannot produce meaningful value estimates at the start of training. We therefore begin with a warm-up phase of $T_\mathrm{warm}$ updates in which the policy is trained using standard policy gradient without any evaluability penalty. During warm-up, we populate a replay buffer $\mathcal{B}$ with tuples $(s,\, G(H),\, \Xi_A)$, where $s \sim \mu_D$ is a deployment state, $G(H)$ is the Monte Carlo return of a trajectory starting at $s$ under the current policy in $\mathcal{M}_D$, and $\Xi_A$ is the assessment behavior of the same policy in $\mathcal{M}_A$. The collected data is then used to initialize the evaluator, after which evaluation-aware policy learning begins. As training proceeds, $\mathcal{B}$ is maintained with data from the $m$ most recent policies, providing the evaluator with a diverse set of policy behaviors to learn from.

At each subsequent iteration, the evaluator parameters $\phi$ are first updated by minimizing the mean squared error in the prediction of state-values for the policies in the buffer $\mathcal{B}$, and the policy parameters $\theta$ are then updated to optimize the performance and evaluability under the new evaluator:
\begin{align}
\phi_{n+1} &\;\leftarrow\; \argmin_{\phi} \; \E_{(s, G(H), \Xi_A) \sim \mathcal{B}} \left[ \bigl(G(H) - \hat{V}_D^{\pi_{\theta_n}, \phi_n}(s \mid \Xi_A)\bigr)^2 \right], \nonumber \\
\theta_{n+1} &\;\leftarrow\; \argmax_{\theta} \; \E_{s \sim \mu_D;\, H \sim \pi_{\theta_n} \mid s, \mathcal{M}_D;\, \Xi_A \sim \pi_{\theta_n} \mid \mathcal{M}_A} \left[ G(H) - \beta\,\bigl(G(H) - \hat{V}_D^{\pi_{\theta_n}, \phi_{n+1}}(s \mid \Xi_A)\bigr)^2 \right]. \nonumber
\end{align}

Algorithm~\ref{alg:eva_rl} (Appendix~\ref{sec:algo}) summarizes the full procedure.

The evaluator update is a straightforward regression of deployment returns conditioned on assessment behavior $\Xi_A$. The policy update is more nuanced, as changing the policy parameters $\theta$ affects both the deployment returns $G(H)$ and the assessment behavior $\Xi_A$. Writing $\E_{s \sim \mu_D;\, H \sim \pi_{\theta} \mid s, \mathcal{M}_D;\, \Xi_A \sim \pi_{\theta} \mid \mathcal{M}_A}$ compactly as $\E_{H, \Xi_A \sim \pi_{\theta}}$, the policy gradient decomposes as:
\begin{align}
    \nabla_{\theta} J(\theta)
    &= \nabla_{\theta}\;\E_{H, \Xi_A \sim \pi_{\theta}}
    \left[
    G(H)
    - \beta\,\bigl(G(H) - \hat{V}_D^{\pi_\theta, \phi}(s \mid \Xi_A)\bigr)^2
    \right] \\
    &= \underbrace{\nabla_{\theta}\E_{H, \Xi_A \sim \pi_{\theta}}\left[G(H)\right]}_{\text{standard policy gradient}}
    - 2\,\beta\,
    \E_{H, \Xi_A \sim \pi_{\theta}}\left[\bigl(
    G(H) - \hat{V}_D^{\pi_\theta, \phi}(s \mid \Xi_A)
    \bigr)
    \bigl(
    \nabla_{\theta}G(H)
    - \nabla_{\theta}\hat{V}_D^{\pi_\theta, \phi}(s \mid \Xi_A)
    \bigr)\right]. \nonumber
\end{align}
The first term is the standard policy gradient~\citep{Sutton1999PolicyGradient}. The second term is an evaluability correction that penalizes policy changes which increase the gap between true and estimated values. Crucially, the evaluator receives the policy's assessment returns $G(H^i)$ as input, so its gradient with respect to the policy parameters decomposes via the chain rule:
\mbox{$
\nabla_{\theta} \hat{V}_D^{\pi_\theta, \phi}(s \mid \Xi_A)
= \sum_{i=1}^k
\frac{\partial\,\hat{V}_D^{\pi_\theta, \phi}(s \mid \Xi_A)}{\partial\,G(H^i)}
\cdot \nabla_{\theta}\,G(H^i).
$}
Changing the policy changes the assessment returns, which in turn changes the evaluator's predictions. The policy is therefore directly incentivized to produce assessment behavior that is informative for value estimation---not just to perform well in deployment.
\section{Experiments and Results}\label{sec:experiments}

In this section, we present experiments designed to answer the following questions: (1)~When using a fixed value evaluator, does increasing the evaluability coefficient produce the performance--evaluability tradeoff predicted by Proposition~\ref{thm:monotonicity_of_pred_gap_returns}, and can co-learning mitigate it? (2)~Does the co-learned evaluator produce more accurate value estimates for EvA-RL policies than standard OPE methods? (3)~When comparing full pipelines end-to-end --- EvA-RL with its co-learned evaluator versus standard RL evaluated with OPE methods --- does EvA-RL achieve competitive returns while reducing evaluation error?

\noindent\textbf{Setup:} We evaluate on three discrete-action Gymnax environments (Asterix, Freeway, and Space Invaders) and three continuous-action Brax environments (HalfCheetah, Reacher, and Ant). Our implementation builds on PureJaxRL~\citep{lu2022discovered}. Agents are trained for 10M environment interactions, and results are averaged over 20 random seeds with standard error reported. We compare value estimates from the co-learned EvA-RL evaluator against four OPE baselines: fitted Q-evaluation (FQE), trajectory importance sampling (TIS), per-decision importance sampling (PDIS), and the doubly robust (DR) estimator. For FQE and DR, we adapt code from the Scope-RL library~\citep{kiyohara2023towards} and tune the hyperparameters of the model learning to achieve low squared-Bellman error on a set of validation transitions. We report mean absolute error (MAE) of state-value estimates, with ground truth values computed via extensive on-policy rollouts. A detailed list of hyperparameters is provided in Appendix~\ref{sec:appendix_hypers}.

To isolate the effect of evaluation-aware learning without confounding it with dynamics mismatch, we design the assessment environment to share the same state and action spaces, transition dynamics, and reward function as the deployment environment, without discounting ($\gamma_{\text{ae}} = 1$); we study the effect of dynamics differences between assessment and deployment separately in Section~\ref{sec:mismatch}. We select 5 assessment start states by sampling states visited during rollouts of a base A2C policy trained using standard policy gradient. This ensures the assessment states are reachable under reasonable behavior and from the region of the state space relevant to learning, while keeping the per-update assessment cost low. We fix the assessment horizon to 10 steps for discrete-action environments and 25 steps for continuous-action environments. Short assessment horizons further limit the cost of each assessment---a key practical consideration since assessments are performed at every policy update---while still allowing the policy to exhibit value-informative behavior within the assessment scenarios. Example assessment start-states for MinAtar environments are shown in Appendix~\ref{sec:appendix_assessment_env_states}.

\subsection{Performance--evaluability tradeoff}

We first establish that the performance--evaluability tradeoff predicted by Proposition~\ref{thm:monotonicity_of_pred_gap_returns} manifests empirically, and that co-learning the evaluator alongside the policy can mitigate it. We train a transformer-based evaluator on data collected from all policies encountered during a standard A2C run of 10M interactions, then freeze it. Starting from a randomly initialized policy, we perform evaluation-aware policy learning under this frozen evaluator with varying~$\beta$ and track the resulting returns and evaluation errors.

Figure~\ref{fig:evarl_with_pre_trained_predictor} shows the results across three discrete-action environments. With the frozen evaluator, increasing $\beta$ consistently decreases MAE but also reduces returns, confirming the tradeoff predicted by Proposition~\ref{thm:monotonicity_of_pred_gap_returns}. This is expected: as the evaluator is frozen, the only way the policy can reduce evaluation error is by moving toward regions where the evaluator happens to be accurate, potentially at the expense of higher-performing but harder-to-evaluate behaviors.

To test whether co-learning can mitigate this tradeoff, we repeat the experiment but allow the evaluator to update alongside the policy. As shown in Figure~\ref{fig:evarl_with_pre_trained_predictor}, co-learning achieves returns close to the standard RL baseline while maintaining low evaluation error. This demonstrates that the tradeoff observed with a frozen evaluator is not inherent to the EvA-RL framework---rather, it reflects the limitations of a fixed evaluator. By co-learning the evaluator alongside the policy, the burden of reducing evaluation error partially shifts onto the evaluator, allowing the policy to focus more on performance maximization.

\begin{figure}[h]
  \centering
  \begin{minipage}{0.55\linewidth}
    \centering
    \includegraphics[width=\textwidth]{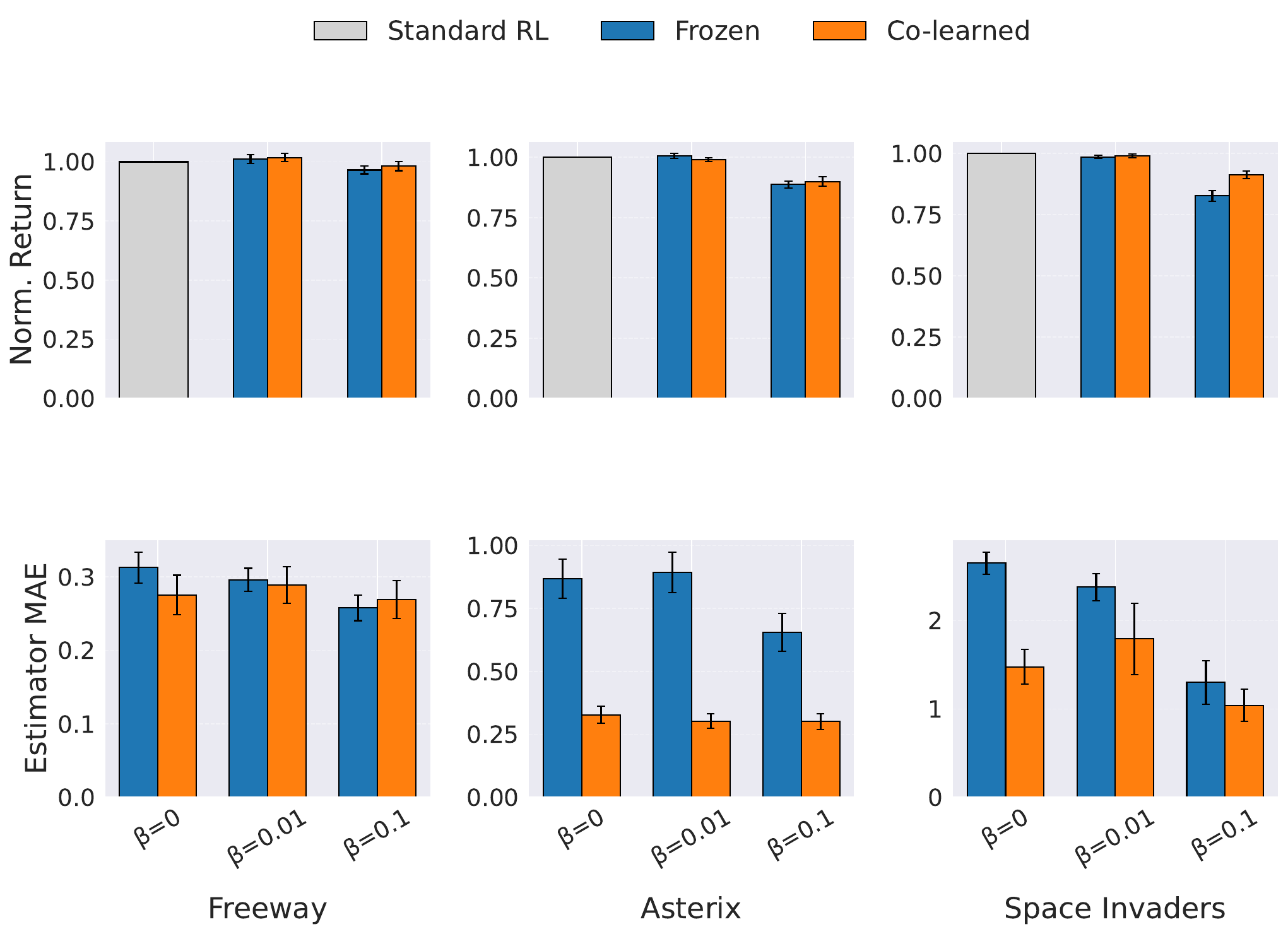}
  \end{minipage}%
  \hfill
  \begin{minipage}{0.4\linewidth}
    \captionof{figure}{\textit{Performance--evaluability tradeoff and the effect of co-learning.} Three conditions are shown per environment: \textit{baseline} (standard RL, $\beta = 0$), \textit{frozen} (EvA-RL with a pre-trained, frozen evaluator at varying $\beta$), and \textit{co-learned} (EvA-RL with the evaluator updated alongside the policy at varying $\beta$). We report mean and standard error across 20 random seeds. Top: normalized returns relative to the baseline. Bottom: value estimation MAE. With a frozen evaluator, increasing $\beta$ reduces MAE at the cost of lower returns, confirming Proposition~\ref{thm:monotonicity_of_pred_gap_returns}. Co-learning achieves returns close to the baseline while generally maintaining low evaluation error, demonstrating that the tradeoff is not inherent to EvA-RL but reflects the limitations of a fixed evaluator.}
    \label{fig:evarl_with_pre_trained_predictor}
  \end{minipage}
\end{figure}

\subsection{Co-learned evaluator vs.\ standard OPE methods}\label{sec:colearnd_evaluator_vs_ope_on_eva}

Having established that co-learning mitigates the performance--evaluability tradeoff, we now test whether the co-learned evaluator produces more accurate value estimates of the evaluation-aware policy than standard OPE methods given access to the same logged training data. We perform EvA-RL with co-learned evaluator for $\beta \in \{0.01, 0.1\}$. All OPE baselines (FQE, TIS, PDIS, DR) are provided with the deployment trajectories logged during EvA-RL training; assessment rollouts are used only by the co-learned evaluator (see Appendix~\ref{sec:appendix_ope_data} for details).

Figure~\ref{fig:evarl_ope_discrete} shows the comparison of the quality of value estimates of the final EvA-RL policy. The co-learned evaluator consistently achieves lower MAE than all OPE baselines across all three discrete-action environments. This advantage stems from the co-learning dynamic as opposed to OPE using a fixed strategy.

\begin{figure}[h]
  \centering
  \includegraphics[width=.65\linewidth]{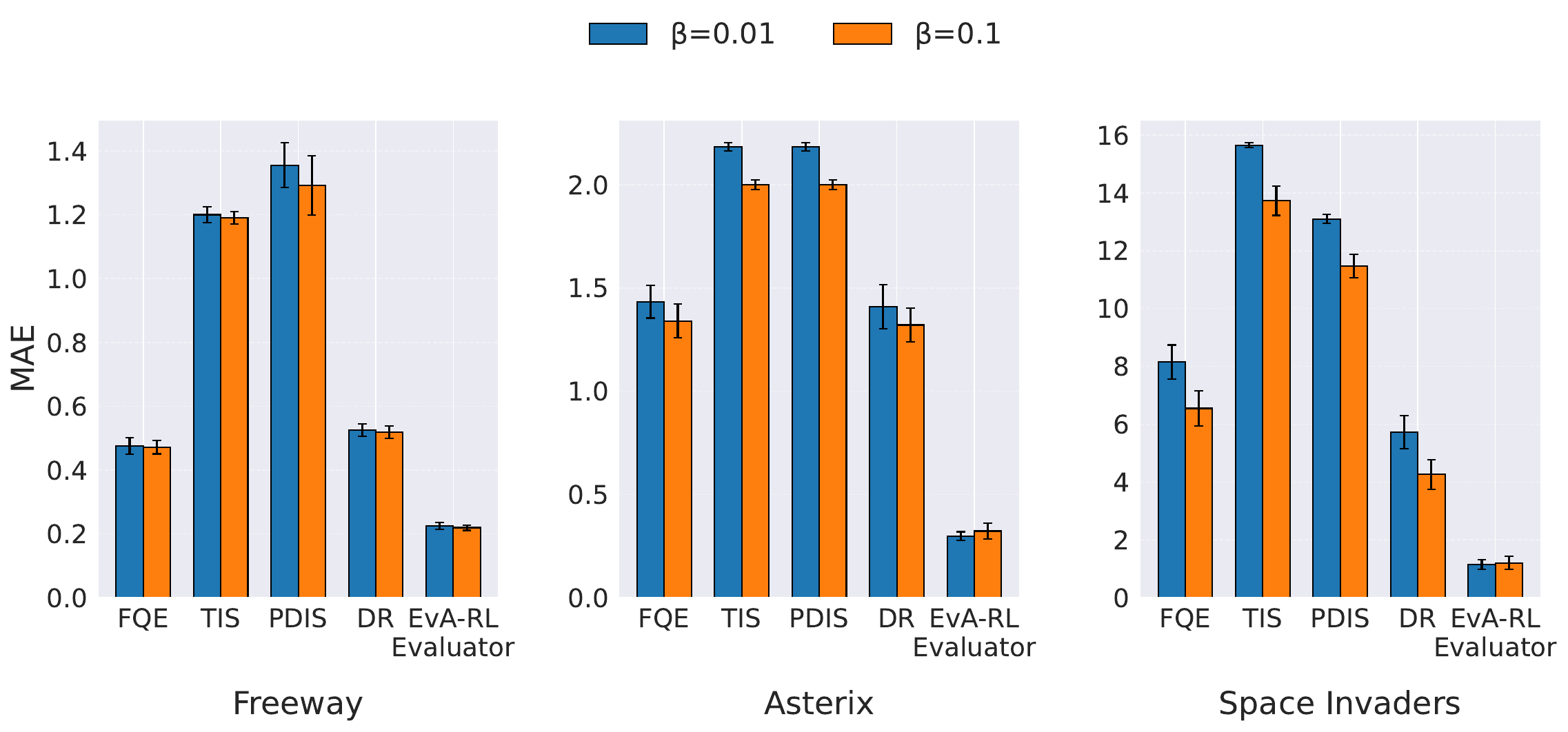}
  \caption{\textit{Co-learned evaluator vs.\ OPE methods on EvA-RL policies.} MAE of state-value estimates across three discrete-action environments for $\beta \in \{0.01, 0.1\}$. Both the co-learned evaluator and all OPE baselines (FQE, TIS, PDIS, DR) have access to the same deployment trajectories logged during EvA-RL training. We report mean and standard error across 20 seeds. The co-learned evaluator consistently achieves lower evaluation error on EvA-RL policies than all OPE baselines.}
  \label{fig:evarl_ope_discrete}
\end{figure}

\subsection{End-to-end pipeline comparison}

\begin{figure}[h]
    \centering
    \includegraphics[width=.65\linewidth]{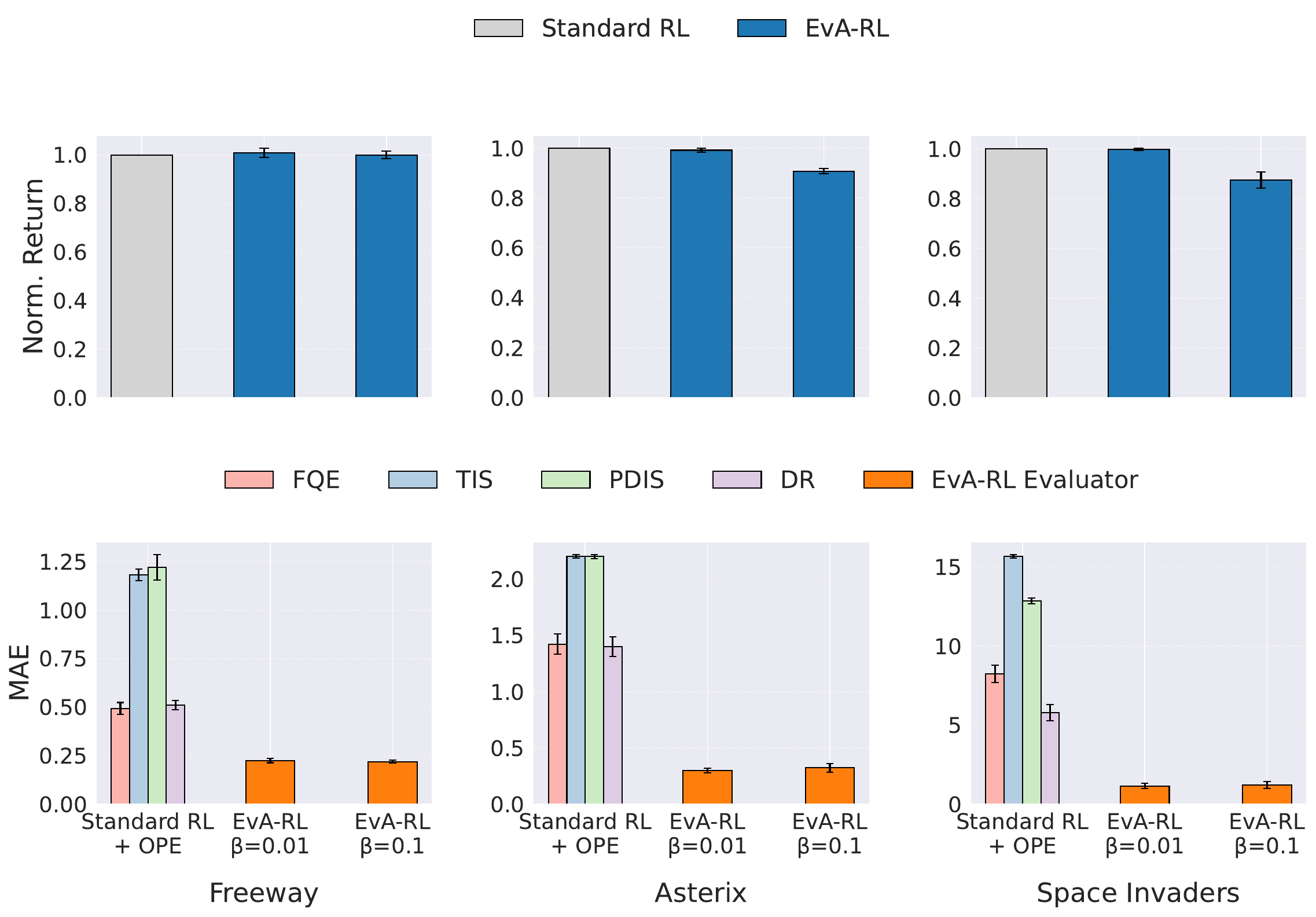}
    \captionof{figure}{\textit{Standard RL + OPE vs.\ EvA-RL in discrete-action environments.} A policy trained with standard A2C is evaluated using four OPE baselines (FQE, TIS, PDIS, DR); a policy trained with EvA-RL is evaluated using its co-learned evaluator. Each pipeline uses the data produced by its own training procedure. Top: normalized discounted returns (relative to standard RL; higher is better). Bottom: MAE of state-value estimates (lower is better). Mean and standard error across 20 seeds. EvA-RL with $\beta=0.01$ nearly matches standard RL returns while reducing evaluation error.}
    \label{fig:evarl_discrete}
\end{figure}

The previous experiment tested whether the co-learned evaluator is better adapted to its policy than OPE methods applied to the same EvA-RL policy. We now add evaluation-aware policy learning and compare full pipelines end-to-end: a policy trained with standard A2C and evaluated using OPE methods, versus a policy trained with EvA-RL and evaluated using its co-learned evaluator. Each pipeline uses the data produced by its own training procedure: OPE baselines use all deployment trajectories logged during standard RL training, while the EvA-RL evaluator uses the same number of deployment trajectories from EvA-RL training, along with the short assessment rollouts (10--25 steps each) that are already collected as part of the EvA-RL training loop. The additional assessment rollouts amount to less than 1\% of the per-update interaction budget: 50 assessment transitions versus 6{,}400 deployment transitions per update in discrete environments, and 125 versus 20{,}480 in continuous environments.

Figure~\ref{fig:evarl_discrete} shows the results on discrete-action environments. EvA-RL with $\beta=0.01$ nearly matches the returns of standard RL while reducing evaluation error relative to all OPE baselines applied to the standard RL policy. This confirms that the evaluation gains observed in Section~\ref{sec:colearnd_evaluator_vs_ope_on_eva} carry over to the full pipeline setting, where both the policy and the evaluator differ.

\begin{table}[H]
    \centering
    \small
    \caption{\textit{EvA-RL vs.\ standard RL + OPE in continuous-action environments.} A policy trained with standard A2C is evaluated using three OPE baselines (FQE, PDIS, DR). A policy trained with EvA-RL is evaluated using its co-learned evaluator. Each pipeline uses the data produced by its own training procedure. Top: MAE of state-value estimates (lower is better). Bottom: normalized returns of EvA-RL relative to standard A2C. Mean $\pm$ standard error across 20 seeds.}
    \label{tab:continuous}
    \begin{tabular}{lccc}
        \toprule
        & \textbf{HalfCheetah} & \textbf{Reacher} & \textbf{Ant} \\
        \midrule
        \multicolumn{4}{l}{\textit{Value estimation MAE ($\downarrow$)}} \\[2pt]
        Standard RL w/ FQE  & $29.19 \pm 4.76$  & $62.22 \pm 3.49$  & $46.58 \pm 5.94$ \\
        Standard RL w/ PDIS & $5.28 \pm 0.66$   & $37.33 \pm 8.12$  & $28.38 \pm 5.48$ \\
        Standard RL w/ DR   & $7.29 \pm 1.94$   & $35.30 \pm 2.87$  & $\mathbf{15.80 \pm 5.74}$ \\
        EvA-RL w/ Evaluator & $\mathbf{4.18 \pm 0.49}$   & $\mathbf{13.90 \pm 0.26}$  & $16.47 \pm 6.61$ \\
        \midrule
        \multicolumn{4}{l}{\textit{Normalized return of EvA-RL ($\uparrow$)}} \\[2pt]
        EvA-RL / Standard A2C & $1.052\pm 0.005$ & $1.000 \pm 0.001$ & $0.999 \pm 0.004$ \\
        \bottomrule
    \end{tabular}
\end{table}

We further test whether these benefits extend to continuous control. Table~\ref{tab:continuous} reports results on HalfCheetah, Reacher, and Ant with $\beta = 5 \times 10^{-4}$. TIS is excluded due to exponential variance at the task horizon of $T = 1000$. EvA-RL achieves normalized returns comparable to standard A2C while the co-learned evaluator produces more accurate value estimates than FQE, PDIS, and DR on HalfCheetah and Reacher, and outperforms FQE and PDIS on Ant, where DR achieves a slightly lower MAE. These results demonstrate that EvA-RL provides competitive performance with more reliable evaluation across both discrete and continuous domains. Learning curves for all Gymnax and Brax environments are provided in Appendix~\ref{sec:appendix_addl_experimental_results}.

\begin{figure}[h]
\centering
\begin{minipage}[t]{0.6\linewidth}
\vspace{0pt}
\centering
\includegraphics[width=\linewidth]{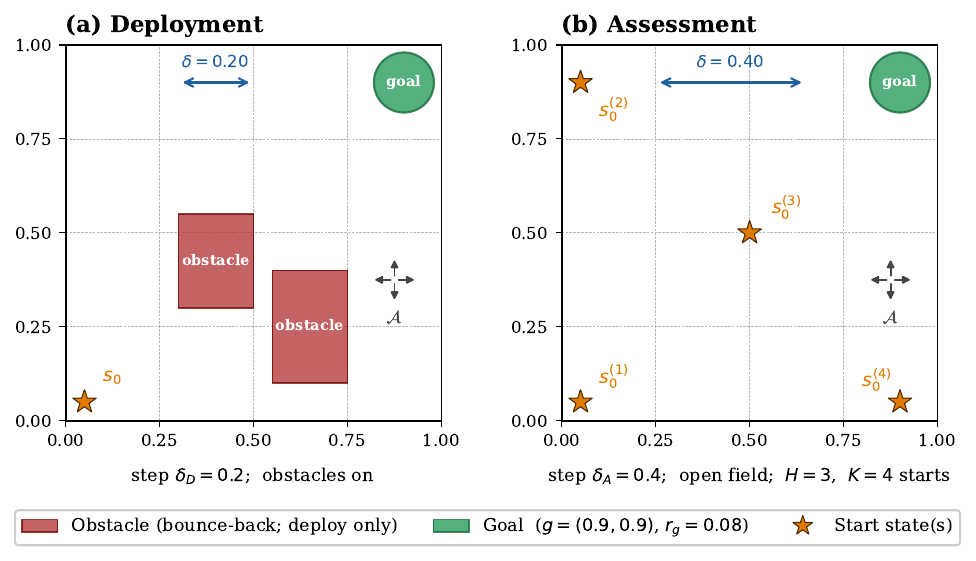}
\caption{\textit{Deployment and assessment environments for the dynamics mismatch experiment.} \textbf{(a)} The deployment environment is a unit-square grid with two rectangular obstacles and step size $\delta_D = 0.2$. The agent starts at $s_0$ (bottom left) and must reach the goal (top right); obstacle collisions result in bounce-back. \textbf{(b)} The assessment environment is an obstacle-free open field with step size $\delta_A = 0.4$. The policy is rolled out for $H = 3$ steps from each of $K = 4$ fixed start states spread across the unit square.}
\label{fig:env_diagram}
\end{minipage}
\hfill
\begin{minipage}[t]{0.38\linewidth}
\vspace{0pt}
\captionof{table}{\textit{EvA-RL vs.\ standard RL under dynamics mismatch.} Deployment: unit-square grid with obstacles, step size $0.2$, dense distance-to-goal reward. Assessment: obstacle-free open field, step size $0.4$, horizon 3 steps per start state. Top: MAE of state-value estimates (lower is better, best in \textbf{bold}). Bottom: mean episodic return. Mean $\pm$ standard error across 30 seeds. Despite the dynamics gap, EvA-RL reduces evaluation error while maintaining competitive returns.}
\label{tab:mismatch}
\centering
\footnotesize
\begin{tabular}{lc}
\toprule
 & \textbf{2D Navigation} \\
\midrule
\multicolumn{2}{l}{\textit{Value estimation MAE ($\downarrow$)}} \\
Standard RL w/ FQE  & $4.03 \pm 0.29$ \\
Standard RL w/ PDIS & $2.11 \pm 0.06$ \\
Standard RL w/ DR   & $2.31 \pm 0.02$ \\
EvA-RL w/ FQE       & $4.13 \pm 0.27$ \\
EvA-RL w/ PDIS      & $1.97 \pm 0.07$ \\
EvA-RL w/ DR        & $2.21 \pm 0.02$ \\
EvA-RL w/ Evaluator & $\mathbf{1.65 \pm 0.18}$ \\
\midrule
\multicolumn{2}{l}{\textit{Episodic return ($\uparrow$)}} \\
Standard RL             & $-4.61 \pm 0.21$ \\
EvA-RL ($\beta = 0.05$) & $-4.78 \pm 0.33$ \\
\bottomrule
\end{tabular}
\end{minipage}
\end{figure}
\subsection{Dynamics mismatch between assessment and deployment}
\label{sec:mismatch}

The experiments above used an assessment environment with shared dynamics. In practice, assessment environments are often simplified approximations of deployment --- designed to elicit value-informative behavior cheaply and safely. We now ask whether EvA-RL remains effective under such differences. We construct a 2D navigation task (continuous state space, four discrete actions) in which the deployment environment is a unit-square grid with rectangular obstacles and step size $0.2$, while the assessment environment is an obstacle-free open field with step size $0.4$ (Figure~\ref{fig:env_diagram}). The agent receives a dense reward of $-\|s' - s_{\text{goal}}\|$ at each step, plus a bonus of $+1$ on reaching the goal. With 4 start states and a horizon of 3 steps each, assessment rollouts amount to 12 transitions per update --- less than 1\% of the 1{,}600 deployment transitions collected per update. We train for 1000 policy update iterations (1.6M environment steps total) with a warm-up phase of 100 updates and $\beta = 0.05$, averaging results over 30 seeds. Additional hyperparameters are provided in Appendix~\ref{sec:appendix_dynamics_mismatch}.

Table~\ref{tab:mismatch} reports value estimation MAE and returns with mean and standard error. Despite the dynamics gap, the co-learned EvA-RL evaluator achieves a lower MAE ($1.65 \pm 0.18$) than all OPE baselines applied to either the standard RL or the EvA-RL policy, while the EvA-RL policy maintains returns comparable to standard RL ($-4.78 \pm 0.33$ vs.\ $-4.61 \pm 0.21$). This demonstrates that co-learning can compensate for a moderate dynamics mismatch: the evaluator learns to map open-field behavior to deployment values even when the two environments differ in step size and obstacle structure.
\section{Conclusion}

We introduced EvA-RL, a framework that incorporates evaluation accuracy into the policy learning objective, as opposed to the standard approach of evaluating a policy post-hoc after training. The key idea is that a policy can be shaped during learning to be not only performant but also amenable to accurate evaluation. This requires an evaluation scheme that can be queried efficiently during training, which we instantiate through assessment environments and a co-learned value evaluator. Our experiments demonstrate that, at the cost of short additional assessment rollouts during training, EvA-RL achieves competitive returns while generally producing lower evaluation error than standard off-policy evaluation methods and that these benefits persist even when the assessment environment differs from deployment in dynamics and obstacle structure.

\paragraph{Limitations.} Assessment start states were sampled heuristically and the warm-up phase length was selected manually; both could be replaced by systematic procedures. The value evaluator conditioned only on assessment start states and returns; richer inputs such as full trajectories could improve accuracy in more complex settings.

\paragraph{Future work.} This work opens several directions, including principled design of assessment environments for low-cost on-policy evaluation, reliable policy performance estimation in safety-critical domains where standard OPE methods are prone to misestimation, and assessment-based deployment certification.

\bibliography{mybib.bib}
\bibliographystyle{iclr2026_conference}

\appendix
\section{Compatibility conditions for a policy to operate in both deployment and assessment MDPs}\label{sec:compatibility_conditions}
For a policy $\pi$ to operate in both the deployment environment $\mathcal{M}_D$ and the assessment environment $\mathcal{M}_A$, the following conditions must be satisfied: (i)~the action space of the assessment environment is a subset of that of the deployment environment, i.e., $\mathcal{A}_A \subseteq \mathcal{A}_D$, so that any action available in $\mathcal{M}_A$ is also meaningful in $\mathcal{M}_D$; and (ii)~states in both environments admit a common representation, i.e., there exists a shared feature space $\mathcal{X}$ and mappings $\phi_D: \mathcal{S}_D \to \mathcal{X}$, $\phi_A: \mathcal{S}_A \to \mathcal{X}$ such that $\pi$ is defined over $\mathcal{X}$. A simple sufficient condition for both is that $\mathcal{S}_A \subseteq \mathcal{S}_D$ and $\mathcal{A}_A \subseteq \mathcal{A}_D$, which holds when the assessment environment is a restriction of the deployment environment to a subset of states or scenarios.

\section{Theorems and proofs}




\subsection{Performance--evaluability tradeoff with a fixed evaluator}
\label{sec:monotonicity_pred_gap_returns}

We first establish two general lemmas about optimization problems of the form $\max_x \{f(x) - \beta\, g(x)\}$, then apply them to the EvA-RL objective.

\begin{lemma}\label{thm:appendix_g_monotonicity}
    Let $x^*(\beta) \in \arg\max_x \{f(x) - \beta\, g(x)\}$ with $\beta \geq 0$. Then $g(x^*(\beta))$ is non-increasing in $\beta$: for $\beta_2 > \beta_1 \geq 0$, $g(x^*(\beta_2)) \leq g(x^*(\beta_1))$.
\end{lemma}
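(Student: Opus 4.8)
The plan is to use the standard \emph{exchange argument} (sometimes called the four-point or interchange argument), which relies solely on the defining optimality of $x^*(\cdot)$ and therefore needs no convexity, differentiability, or uniqueness of maximizers. Fix $\beta_2 > \beta_1 \ge 0$ and write $x_1 \coloneqq x^*(\beta_1)$ and $x_2 \coloneqq x^*(\beta_2)$ for the two selected maximizers.

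First I would record the two inequalities that follow immediately from the definition of $\argmax$. Since $x_1$ maximizes $f(\cdot) - \beta_1 g(\cdot)$, it scores at least as high as the competitor $x_2$ at parameter $\beta_1$; symmetrically, $x_2$ scores at least as high as $x_1$ at parameter $\beta_2$. That is,
\begin{align}
    f(x_1) - \beta_1 g(x_1) &\ge f(x_2) - \beta_1 g(x_2), \\
    f(x_2) - \beta_2 g(x_2) &\ge f(x_1) - \beta_2 g(x_1).
\end{align}

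Next I would add these two inequalities. The $f$ terms cancel on both sides, and collecting the surviving terms yields $(\beta_2 - \beta_1)\bigl(g(x_1) - g(x_2)\bigr) \ge 0$. Because $\beta_2 - \beta_1 > 0$ by hypothesis, this forces $g(x_1) \ge g(x_2)$, i.e. $g\bigl(x^*(\beta_2)\bigr) \le g\bigl(x^*(\beta_1)\bigr)$, which is exactly the asserted monotonicity.

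I do not anticipate a genuine obstacle: the entire content is the cancellation after summing the two optimality inequalities. The only point deserving a remark is that $x^*(\beta)$ denotes an arbitrary element of a possibly non-singleton $\argmax$ set, but since the derivation uses nothing about $x_1, x_2$ beyond the two displayed inequalities, it holds for every such selection. As a cheap byproduct worth noting, rearranging the first inequality alone gives $f(x_1) - f(x_2) \ge \beta_1\bigl(g(x_1) - g(x_2)\bigr) \ge 0$, so $f\bigl(x^*(\beta)\bigr)$ is likewise non-increasing in $\beta$; applied with $f$ the expected return and $g = \zeta_{\pi,\psi}^2$ the squared value-prediction error, the two monotonicities together supply both halves of Proposition~\ref{thm:monotonicity_of_pred_gap_returns}.
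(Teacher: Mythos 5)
Your proof is correct and is essentially identical to the paper's: both write down the two optimality inequalities for $x_1$ and $x_2$, add them so the $f$ terms cancel, and use $\beta_2-\beta_1>0$ to conclude $g(x_2)\le g(x_1)$ (the paper phrases the last step as a contradiction, yours directly, which is immaterial). Your closing remark on the monotonicity of $f(x^*(\beta))$ also reproduces the paper's companion lemma verbatim in substance.
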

\begin{proof}
    Let $x_1 = x^*(\beta_1)$ and $x_2 = x^*(\beta_2)$ with $\beta_2 > \beta_1$. Optimality gives:
    \[
        f(x_1) - \beta_1 g(x_1) \geq f(x_2) - \beta_1 g(x_2), \qquad f(x_2) - \beta_2 g(x_2) \geq f(x_1) - \beta_2 g(x_1).
    \]
    Suppose for contradiction that $g(x_2) > g(x_1)$. Adding the two inequalities yields:
    \[
        0 \geq (\beta_2 - \beta_1)(g(x_2) - g(x_1)) > 0,
    \]
    a contradiction. Hence $g(x_2) \leq g(x_1)$.
\end{proof}

\begin{lemma}\label{thm:appendix_f_monotonicity}
    Under the same setup, $f(x^*(\beta))$ is non-increasing in $\beta$: for $\beta_2 > \beta_1 \geq 0$, $f(x^*(\beta_2)) \leq f(x^*(\beta_1))$.
\end{lemma}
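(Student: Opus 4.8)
The plan is to piggyback on the two optimality inequalities already written down in the proof of Lemma~\ref{thm:appendix_g_monotonicity}, and to combine just one of them with the monotonicity of \(g\) that that lemma establishes. Concretely, I would again set \(x_1 = x^*(\beta_1)\) and \(x_2 = x^*(\beta_2)\) for \(\beta_2 > \beta_1 \ge 0\), so that the two objects I must compare are \(f(x_1)\) and \(f(x_2)\).

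First I would invoke optimality of \(x_1\) against the competitor \(x_2\) at the parameter \(\beta_1\), which gives \(f(x_1) - \beta_1 g(x_1) \ge f(x_2) - \beta_1 g(x_2)\). Rearranging isolates the quantity of interest: \(f(x_1) - f(x_2) \ge \beta_1\bigl(g(x_1) - g(x_2)\bigr)\). The next step is to sign the right-hand side. By Lemma~\ref{thm:appendix_g_monotonicity}, \(g(x^*(\beta))\) is non-increasing, so \(g(x_1) = g(x^*(\beta_1)) \ge g(x^*(\beta_2)) = g(x_2)\), hence \(g(x_1) - g(x_2) \ge 0\); together with \(\beta_1 \ge 0\) this forces the right-hand side to be non-negative. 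Therefore \(f(x_1) \ge f(x_2)\), i.e.\ \(f(x^*(\beta_1)) \ge f(x^*(\beta_2))\), which is exactly the claimed monotonicity.

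The main subtlety, rather than a genuine obstacle, is recognizing that only the single optimality inequality at the \emph{smaller} parameter \(\beta_1\) is needed, and that one must not lean on the tempting symmetric inequality at \(\beta_2\): rearranging the latter yields \(f(x_2) - f(x_1) \ge \beta_2\bigl(g(x_2) - g(x_1)\bigr)\), whose right-hand side is non-positive and therefore gives no usable bound in the desired direction. The argument is otherwise purely algebraic: no convexity, smoothness, or further structure of \(f\) and \(g\) is required beyond the variational characterization of \(x^*(\beta)\) and the previously established monotonicity of \(g\).
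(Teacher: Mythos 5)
Your argument is correct and is essentially identical to the paper's own proof: both use the single optimality inequality of \(x^*(\beta_1)\) against \(x^*(\beta_2)\) at parameter \(\beta_1\), then sign the right-hand side \(\beta_1\bigl(g(x_1)-g(x_2)\bigr)\ge 0\) via Lemma~\ref{thm:appendix_g_monotonicity}. Your added remark about why the symmetric inequality at \(\beta_2\) is unusable is a nice clarification but does not change the substance.
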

\begin{proof}
    With $x_1, x_2$ as above, Lemma~\ref{thm:appendix_g_monotonicity} gives $g(x_1) \geq g(x_2)$. Optimality at $\beta_1$ implies:
    \[
        f(x_1) - \beta_1 g(x_1) \geq f(x_2) - \beta_1 g(x_2) \implies f(x_1) - f(x_2) \geq \beta_1(g(x_1) - g(x_2)) \geq 0.
    \]
    Hence $f(x_2) \leq f(x_1)$.
\end{proof}

\begin{proposition}\label{thm:appendix_monotonicity_of_pred_gap_returns}
    Increasing $\beta$ in the EvA-RL objective (Eq.~\ref{eq:value_predictability_objective}) with a fixed evaluator leads to non-increasing evaluation error $\mathcal{L}_\mathrm{MSE}$ and non-increasing expected return $J_\pi$ of the resulting policy.
\end{proposition}
\begin{proof}
    The EvA-RL objective has the form $\max_\pi \{f(\pi) - \beta\, g(\pi)\}$ with $f(\pi) = J_\pi$ and $g(\pi) = \mathcal{L}_\mathrm{MSE}(\pi, \hat{V})$. Applying Lemmas~\ref{thm:appendix_g_monotonicity} and~\ref{thm:appendix_f_monotonicity} directly yields the result. Note that the lemmas hold for any feasible set over which the maximization is performed, including one defined by Bellman consistency constraints. Therefore the result applies both to the relaxed setting where state-values are free in $\mathbb{R}^n$ and to the practical setting where they must correspond to valid policies.
\end{proof}

\subsection{Non-uniqueness of optimal value functions in EvA-RL}
\label{sec:appendix_non_uniqueness}

As an additional structural observation, we note that the EvA-RL objective can admit multiple optimal value functions under a fixed estimator — a departure from standard RL, where all optimal policies share the same optimal value function~\citep{SuttonBarto1998}. We establish this formally below under specific assumptions: a finite state set, shared dynamics between deployment and assessment environments, and a linear transformer-based estimator~\citep{Katharopoulos2020TransformersRNNs}.

\subsubsection{Setup and assumptions}
\label{sec:appendix_multiplicity_setup}

For this analysis, we consider the setting where the assessment environment is a restriction of the deployment environment: both share the same state set $\mathcal{S}$, action set $\mathcal{A}$, transition dynamics $P$, and reward function $R$, with $|\mathcal{S}| = n$ finite. Without loss of generality, the first $k$ states are the assessment start states. Since the dynamics are shared, $V_A^\pi(s^i) = V_D^\pi(s^i)$ for all assessment start states $s^i$.

We consider a sub-class of evaluators that produce a state-value estimate as a weighted average of assessment state-values, with weights determined by the similarity between the evaluation state and the assessment start states. Concretely, we use a linear transformer-based~\citep{Katharopoulos2020TransformersRNNs} function approximator:
\begin{equation}
    \label{eq:similarity_based_predictability_value_estimator}
    \hat{V}_D^{\pi, \phi}(s \mid \Xi_A) = \frac{\sum_{i=1}^{k} \phi(s)^T \phi(s^i) V_D^\pi(s^i)}{\sum_{i=1}^{k} \phi(s)^T \phi(s^i)},
\end{equation}
where $\phi: \mathcal{S} \to \mathbb{R}^{d'}$ is a state-embedding function with positive inner products. We write $\hat{V}_D^{\pi, \phi}$ to make the dependence on $\phi$ explicit and assume that $\phi$ is fixed throughout the optimization. The mean squared error of a policy $\pi$ with respect to this evaluator is $\mathcal{L}_\mathrm{MSE}(\pi, \phi) = \sum_{s \in \mathcal{S}} \mu_D(s) \big(V_D^\pi(s) - \hat{V}_D^{\pi, \phi}(s \mid \Xi_A)\big)^2$.

\subsubsection{Vectorized formulation}
\label{sec:appendix_vectorized}

To leverage linear algebraic tools, we recast the EvA-RL objective in vectorized form. Let $\mu_D = [\mu_D(s_1), \ldots, \mu_D(s_n)]^T$ and $V_D^{\pi} = [V_D^{\pi}(s_1), \ldots, V_D^{\pi}(s_n)]^T$. We define a similarity function $f: \mathcal{S} \times \mathcal{S} \to \mathbb{R}^+$ as $f(s, s') = \phi(s)^T \phi(s')$, so that the evaluator becomes:
\begin{equation}
    \hat{V}_D^{\pi, \phi}(s \mid \Xi_A) = \frac{\sum_{s^i \in \mathcal{S}_A} f(s, s^i) \, V_D^\pi(s^i)}{\sum_{s^i \in \mathcal{S}_A} f(s, s^i)}.
\end{equation}

We construct an $n \times n$ matrix $F$ with entries $F_{ij} = f(s_i, s^j)$ for $j \leq k$ and $F_{ij} = 0$ for $j > k$:
\begin{equation}
    F = \begin{bmatrix}
        f(s^1, s^1) & \cdots & f(s^1, s^k) & 0 & \cdots & 0 \\
        \vdots & \ddots & \vdots & \vdots & \ddots & \vdots \\
        f(s^k, s^1) & \cdots & f(s^k, s^k) & 0 & \cdots & 0 \\
        f(s_{k+1}, s^1) & \cdots & f(s_{k+1}, s^k) & 0 & \cdots & 0 \\
        \vdots & \ddots & \vdots & \vdots & \ddots & \vdots \\
        f(s_n, s^1) & \cdots & f(s_n, s^k) & 0 & \cdots & 0
    \end{bmatrix}_{n \times n}
\end{equation}
where we use superscripted $s$ for assessment start states and subscripted $s$ for the remaining states. The vector of value estimates is then:
\begin{equation}
    \hat{V}_D^{\pi, \phi} = \mathrm{diag}(F \mathbf{1}_n)^{-1} F V_D^{\pi},
\end{equation}
where $\mathbf{1}_n$ is a vector of ones of size $n$. Defining $Q = (I - \mathrm{diag}(F \mathbf{1}_n)^{-1} F)^T \mathrm{diag}(\mu_D)(I - \mathrm{diag}(F \mathbf{1}_n)^{-1} F)$, the MSE can be written compactly as:
\begin{equation}
    \label{eq:mse_in_vector_form}
    \mathcal{L}_\mathrm{MSE}(\pi, \phi) = V_D^{\pi T} Q V_D^{\pi}.
\end{equation}

The soft and hard EvA-RL objectives become, respectively:
\begin{align}
    \label{eq:vectorized_soft}
    &\max_\pi \quad \mu_D^T V_D^\pi - \beta \, V_D^{\pi T} Q V_D^\pi,\; \beta > 0 \\
    \label{eq:vectorized_hard}
    &\max_\pi \quad \mu_D^T V_D^\pi \quad \text{s.t.} \quad V_D^{\pi T} Q V_D^\pi \leq \epsilon^2,\; \epsilon > 0
\end{align}

\subsubsection{Convexity of the hard-constrained problem}
\label{sec:appendix_convexity}

\begin{lemma}\label{thm:convexity_of_optimization_problem}
    The hard-constrained EvA-RL problem, with state-values treated as free variables in $\mathbb{R}^n$ (i.e., without enforcing Bellman consistency):
    \begin{equation}
        \label{eq:hard_constrained_real_valued_values_pred_objective}
        \max_{V_D^\pi \in \mathbb{R}^n} \; \mu_D^T V_D^{\pi} \quad \text{s.t.} \quad V_D^{\pi T} Q V_D^{\pi} \leq \epsilon^2
    \end{equation}
    is a convex optimization problem. Specifically, it is a quadratically constrained linear program (QCLP).
\end{lemma}
\begin{proof}
    The objective $\mu_D^T V_D^{\pi}$ is affine in $V_D^\pi$. For the constraint, observe that for all $V_D^{\pi} \in \mathbb{R}^n$:
    \begin{equation}
        V_D^{\pi T} Q V_D^{\pi} = V_D^{\pi T} (I - \mathrm{diag}(F \mathbf{1}_n)^{-1} F)^T \mathrm{diag}(\mu_D) (I - \mathrm{diag}(F \mathbf{1}_n)^{-1} F) V_D^{\pi} \geq 0,
    \end{equation}
    so $Q$ is positive semi-definite and the constraint set is convex. The problem is therefore a QCLP~\citep{BoydVandenberghe2004Convex}.
\end{proof}

\subsubsection{Upper bound on expected return}
\label{sec:appendix_upper_bound_relaxed_evarl}

The optimal value of a problem of the form $\max_{x} \; a^T x \;\text{s.t.}\; x^T Q x \leq \epsilon^2$, where $Q$ is positive semi-definite, is $a^T x^* = \epsilon \sqrt{a^T Q^{\dagger} a}$, where $Q^{\dagger}$ denotes the pseudo-inverse of $Q$. Applying this to the hard-constrained problem~\eqref{eq:hard_constrained_real_valued_values_pred_objective}, the maximum expected return without Bellman consistency is:
\begin{equation}
    \mu_D^T V_D^{\pi^*} = \epsilon \sqrt{\mu_D^T Q^{\dagger} \mu_D}.
\end{equation}
The expected return under Bellman consistency is strictly upper bounded by this quantity, since Bellman constraints further restrict the feasible set.

\subsubsection{Equivalence of hard and soft formulations}
\label{sec:appendix_relation_soft_hard}

\begin{theorem}\label{thm:appendix_beta_epsilon_relation}
    Any solution $V_D^{\pi^*_\epsilon}$ to the hard-constrained problem~\eqref{eq:hard_constrained_real_valued_values_pred_objective} is a solution to the soft-constrained problem
    \begin{equation}
        \label{eq:real_valued_soft_pred_objective}
        \max_{V_D^{\pi} \in \mathbb{R}^n} \quad \mu_D^T V_D^\pi - \beta \, V_D^{\pi T} Q V_D^{\pi},
    \end{equation}
    for $\beta = \mu_D^T V_D^{\pi^*_{\epsilon}} / (2\epsilon^2)$. Conversely, any solution $V_D^{\pi^*_{\beta}}$ to the soft-constrained problem is a solution to the hard-constrained problem with $\epsilon^2 = V_D^{\pi^*_{\beta} T} Q V_D^{\pi^*_{\beta}}$.
\end{theorem}
\begin{proof}
    The soft-constrained objective in vectorized form is:
    \begin{equation}
        \max_{V_D^{\pi} \in \mathbb{R}^n} \quad \mu_D^T V_D^{\pi} - \beta \, V_D^{\pi T} Q V_D^{\pi}.
    \end{equation}
    The first-order optimality condition gives:
    \begin{equation}
        \mu_D - 2\beta Q V_D^{\pi} = 0 \implies Q V_D^{\pi} = \frac{1}{2\beta} \mu_D.
    \end{equation}
    Let $V_D^{\pi^*_{\beta}}$ be a solution. Set $\epsilon^2 = V_D^{\pi^*_{\beta} T} Q V_D^{\pi^*_{\beta}}$. Suppose a global optimum $V_D^{\pi^*_{\epsilon}}$ of the hard-constrained problem satisfies $V_D^{\pi^*_{\epsilon}} \neq V_D^{\pi^*_{\beta}}$. Then:
    \begin{equation}
        \mu_D^T V_D^{\pi^*_{\epsilon}} - \beta \, V_D^{\pi^*_{\epsilon} T} Q V_D^{\pi^*_{\epsilon}} > \mu_D^T V_D^{\pi^*_{\beta}} - \beta \, V_D^{\pi^*_{\beta} T} Q V_D^{\pi^*_{\beta}},
    \end{equation}
    contradicting optimality of $V_D^{\pi^*_{\beta}}$ for the soft problem. Hence $V_D^{\pi^*_{\beta}}$ is also optimal for the hard-constrained problem.

    For the converse, the Lagrangian of the hard-constrained problem is:
    \begin{equation}
        \mathcal{L}(V_D^{\pi}, \lambda) = -\mu_D^T V_D^{\pi} + \lambda \big(V_D^{\pi T} Q V_D^{\pi} - \epsilon^2\big).
    \end{equation}
    At the optimum $V_D^{\pi^*_{\epsilon}}$, stationarity gives $Q V_D^{\pi^*_{\epsilon}} = \frac{1}{2\lambda} \mu_D$. Comparing with the soft optimality condition yields $\lambda = \beta$, and substituting back gives $\beta = \mu_D^T V_D^{\pi^*_{\epsilon}} / (2\epsilon^2)$.
\end{proof}

\begin{corollary}\label{thm:appendix_bellman_relaxed_soft_evarl}
    The soft-constrained EvA-RL objective (without Bellman consistency) admits multiple optimal value functions.
\end{corollary}
\begin{proof}
    From the first-order condition, any $V_D^{\pi} \in \mathbb{R}^n$ satisfying $Q V_D^{\pi} = \frac{1}{2\beta} \mu_D$ is optimal. Since $Q$ is positive semi-definite (but not positive definite in general), this system admits infinitely many solutions differing by vectors in the null space of $Q$.
\end{proof}

\begin{figure}[H]
  \centering
    \begin{subfigure}[t]{.28\textwidth}
        \centering
        \includegraphics[width=\linewidth]{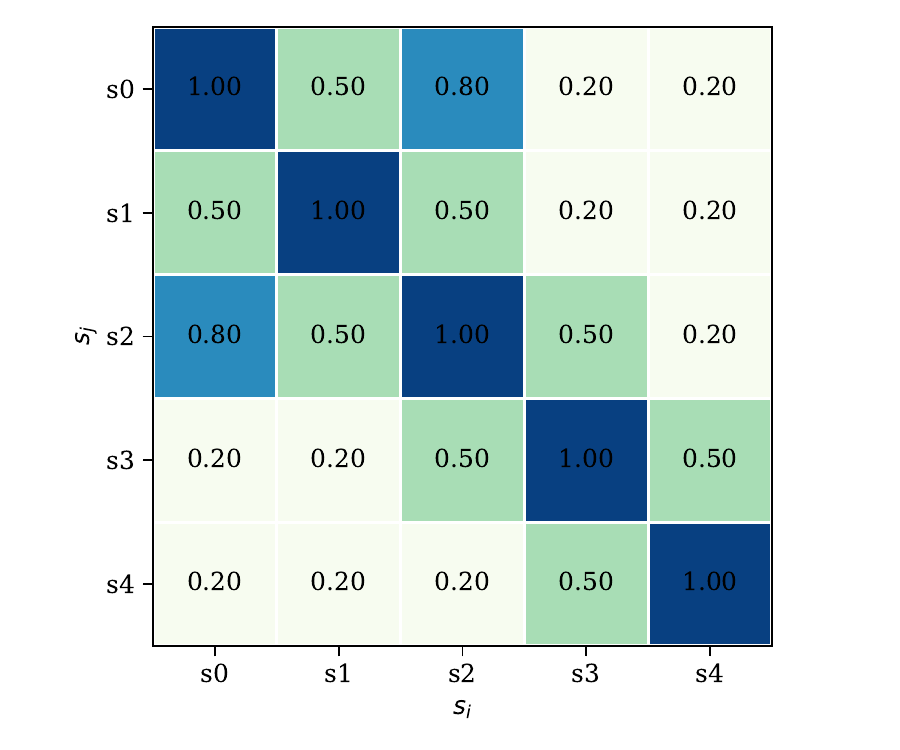}
        \caption{$\phi(s_i)^T \phi(s_j)$}
        \label{fig:state_sim}
    \end{subfigure}\hfill
  \begin{subfigure}[t]{0.35\textwidth}
    \centering
    \includegraphics[height=3cm]{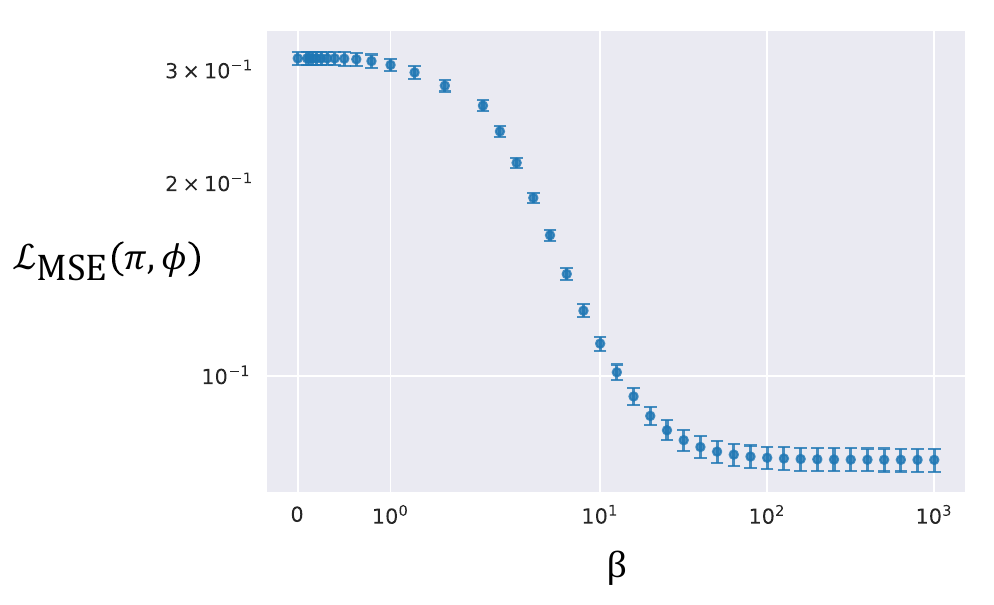}
    \caption{$\beta$ vs. $\mathcal{L}_\mathrm{MSE}(\pi, \phi)$}
    \label{fig:beta_vs_g}
  \end{subfigure}\hfill
  \begin{subfigure}[t]{0.35\textwidth}
    \centering
    \includegraphics[height=3cm]{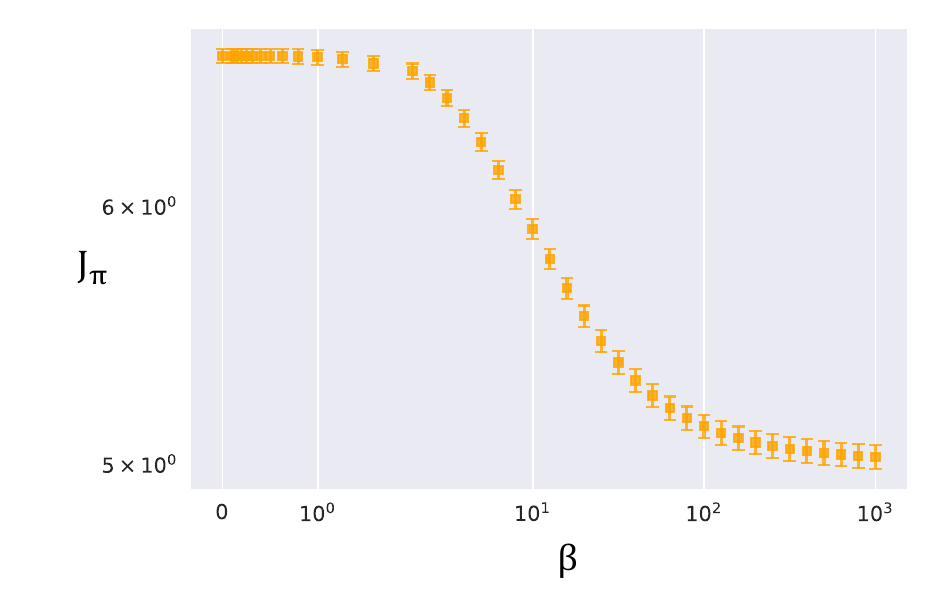}
    \caption{$\beta$ vs. $J_\pi$.}
    \label{fig:beta_vs_j}
  \end{subfigure}
  \caption{\textbf{Empirical verification of the performance--evaluability tradeoff (Proposition~\ref{thm:monotonicity_of_pred_gap_returns}).} \textit{Setup:} We construct a tabular MDP with 5 states and 2 actions per state. States $s_0$ and $s_2$ form the assessment environment, and the remaining states are non-assessment states. The state similarity function $f(s_i, s_j)$, used by the evaluator (Eq.~\ref{eq:similarity_based_predictability_value_estimator}), is fixed as shown in subfigure~\ref{fig:state_sim}. The start-state distribution $\mu_D$ is uniform over all states and the discount factor is $\gamma = 0.9$. \textit{Procedure:} For each trial, we randomly sample the transition and reward dynamics of the MDP. For each value of $\beta$, we train a softmax policy by optimizing the soft EvA-RL objective (Eq.~\ref{eq:value_predictability_objective}) and record the resulting evaluation error $\mathcal{L}_\mathrm{MSE}(\pi, \phi)$ and expected return $J_\pi$. Each data point is averaged over 1000 random trials; error bars denote standard error. \textit{Results:} Subfigures~\ref{fig:beta_vs_g} and~\ref{fig:beta_vs_j} confirm that both the evaluation error and the expected return are non-increasing in $\beta$, consistent with Proposition~\ref{thm:monotonicity_of_pred_gap_returns}.}
  \label{fig:beta_effect}
\end{figure}

\clearpage

\section{JAX implementation of Value Estimation Transformer}
\label{sec:appendix_pred_transformer}

\begin{lstlisting}[language=Python, basicstyle=\scriptsize\ttfamily, commentstyle=\color{gray}, keywordstyle=\color{blue}, stringstyle=\color{red}, numberstyle=\scriptsize, numbersep=5pt, frame=single, breaklines=true, showstringspaces=false]
import jax.numpy as jnp
import flax.linen as nn

class ValueEvaluationHead(nn.Module):

    num_heads:  int
    hidden_dim: int
    num_layers: int

    @nn.compact
    def __call__(
        self,
        assess_env_start_states:  jnp.ndarray,   # [B,k,obs]
        assess_env_returns: jnp.ndarray,   # [B,k]
        query_state:    jnp.ndarray    # [B,obs]
    ) -> jnp.ndarray:                  # [B]

        batch_size, k, obs_dim = assess_env_start_states.shape
        h = self.hidden_dim
        
        # --------------------------------------------------------
        # Token embeddings
        # --------------------------------------------------------
        assess_state_tokens  = nn.Dense(h)(assess_env_start_states)  # [B,k,h]
        assess_return_tokens = nn.Dense(h)(assess_env_returns[..., None])  #[B,k,h]
        query_state_token = nn.Dense(h)(query_state)[:, None, :]  # [B,1,h]

        # --------------------------------------------------------
        # Positional embeddings
        # --------------------------------------------------------
        pos_embeddings = nn.Embed(num_embeddings=k + 1, 
                        features=h)(jnp.arange(k + 1))  # [k + 1,h]
        
        assess_state_tokens = assess_state_tokens + pos_embeddings[:-1, :]  #[B,k,h]
        assess_return_tokens = assess_return_tokens + pos_embeddings[-1, :]  # [B,k,h]
        query_state_token = query_state_token + pos_embeddings[-1, None, :]  # [B,1,h]

        # --------------------------------------------------------
        # Full Sequence
        # --------------------------------------------------------
        full_sequence = jnp.concatenate([assess_state_tokens, 
        assess_return_tokens, query_state_token], axis=1)    # [B,2k+1,h]

        # --------------------------------------------------------
        # Transformer
        # --------------------------------------------------------
        x = full_sequence
        for _ in range(self.num_layers):
            # Self-attention block
            y = nn.LayerNorm()(x)
            y = nn.SelfAttention(num_heads=self.num_heads,
                                 qkv_features=h)(y)
            x = x + y
            # Feed-forward block
            y = nn.LayerNorm()(x)
            y = nn.relu(nn.Dense(h)(y))
            y = nn.Dense(h)(y)
            x = x + y

        query_representation = x[:, -1, :]   # last (query) token
        value_prediction     = nn.Dense(1)(query_representation)  # [B,1]
        return jnp.squeeze(value_prediction, -1)                  # [B]
\end{lstlisting}

\textbf{Note}: We reduce the font size to accomodate the entire block on a single page.

\section{EvA-RL Algorithm}
\label{sec:algo}

\begin{algorithm}[H]
\caption{Policy optimization with EvA-RL}
\label{alg:eva_rl}
\begin{algorithmic}[1]
\REQUIRE Policy $\pi_\theta$, value evaluator $\hat{V}_D^{\pi_\theta, \phi}$, evaluability coefficient $\beta$, warm-up steps $T_\mathrm{warm}$
\STATE Initialize empty replay buffer $\mathcal{B}$
\FOR{$t = 0, 1, 2, \dots$}
    \STATE Roll out $\pi_{\theta_t}$ in $\mathcal{M}_A$ and $\mathcal{M}_D$
    \STATE Store tuple $(s,\, G(H),\, \Xi_A)$ in $\mathcal{B}$; retain data from $m$ most recent policies
    \IF{$t > T_{\mathrm{warm}}$}
        \STATE Update $\phi$: minimize squared error of evaluator predictions for policies in $\mathcal{B}$
        \STATE Update $\theta$: maximize EvA-RL objective (Eq.~\ref{eq:value_predictability_objective}) with updated $\phi$
    \ELSE
        \STATE Update $\theta$: standard policy gradient (no evaluability penalty)
    \ENDIF
\ENDFOR
\end{algorithmic}
\end{algorithm}

\section{Hyperparameter Details}\label{sec:appendix_hypers}

In this section, we provide details of the hyperparameters pertaining to the EvA-RL algorithm. The hyperparameters for policy gradient optimization using advantage-actor critic follow the default values in the PureJaxRL library~\citep{lu2022discovered} for Gymanx and Brax environments.

\subsection{Gymnax MinAtar Environments}

\begin{table}[H]
    \centering
    \begin{tabular}{lll}
        \toprule
        \textbf{Sr. No.} & \textbf{Hyperparameter} & \textbf{Value} \\
        \midrule
        1 & Number of Environments & 64 \\
        2 & Number of Steps per Environment & 100 \\
        3 & Total Timesteps & 1e7 \\
        4 & Assessment Horizon & 10 \\
        5 & General Trajectory Length & 200 \\
        6 & Learning Rate & 1e-4 \\
        7 & Number of Epochs & 100 \\
        8 & Number of Heads in Transformer & 4 \\
        9 & Number of Layers in Transformer & 4 \\
        10 & Hidden Dimension in Transformer & 16 \\
        11 & Number of Assessment Start States & 5 \\
        12 & Batch Size & 256 \\
        13 & Evaluability Coefficient & 0, 1e-2, 1e-1 \\
        14 & Number of Evaluator Updates & 5 \\
        15 & Seed & 0, 1, \dots{}, 19 \\
        16 & Maximum Buffer Size & 32000 Transitions \\
        \bottomrule
    \end{tabular}
\end{table}

\subsection{Brax Robotics Simulations}
\begin{table}[H]
    \centering
    \begin{tabular}{lll}
        \toprule
        \textbf{Sr. No.} & \textbf{Hyperparameter} & \textbf{Value} \\
        \midrule
        1 & Number of Environments & 2048 \\
        2 & Number of Steps per Environment & 10 \\
        3 & Total Timesteps & 2e7 \\
        4 & Assessment Horizon & 25 \\
        5 & General Trajectory Length & 1000 \\
        6 & Batch Size & 256 \\
        7 & Learning Rate & 1e-4 \\
        8 & Number of Epochs & 100 \\
        9 & Number of Heads in Transformer & 4 \\
        10 & Number of Layers in Transformer & 4 \\
        11 & Hidden Dimension in Transformer & 16 \\
        12 & Number of Assessment Start States & 5 \\
        13 & Number of Evaluator Updates & 5 \\
        14 & Evaluability Coefficient & 0, 5e-4 \\
        15 & Seed & 0, 1, \dots{}, 19 \\
        16 & Maximum Buffer Size & 102400 Transitions \\
        \bottomrule
    \end{tabular}
\end{table}

\subsection{Dynamics Mismatch Experiment}
\label{sec:appendix_dynamics_mismatch}
\begin{table}[H]
\centering
\begin{tabular}{clc}
\toprule
\textbf{Sr. No.} & \textbf{Hyperparameter} & \textbf{Value} \\
\midrule
\multicolumn{3}{l}{\textit{Environment}} \\
1  & State space & Continuous, $[0,1]^2$ \\
2  & Action space & Discrete, 4 (cardinal directions) \\
3  & Deployment step size $\delta_D$ & $0.2$ \\
4  & Assessment step size $\delta_A$ & $0.4$ \\
5  & Discount factor $\gamma$ & $0.99$ \\
6  & Max deployment steps per episode & $50$ \\
7  & Goal center & $(0.9, 0.9)$ \\
8  & Goal radius & $0.08$ \\
9  & Deployment start state & $(0.05, 0.05)$ \\
\midrule
\multicolumn{3}{l}{\textit{Training}} \\
10 & Number of parallel environments & $8$ \\
11 & Steps per environment per update & $200$ \\
12 & Total policy update iterations & $1000$ \\
13 & Warm-up iterations & $100$ \\
14 & Policy learning rate & $5 \times 10^{-4}$ \\
15 & Critic learning rate & $1 \times 10^{-3}$ \\
16 & Evaluator learning rate & $3 \times 10^{-3}$ \\
17 & Entropy coefficient & $0.01$ \\
18 & Evaluability coefficient $\beta$ & $0.05$ \\
19 & Number of seeds & $30$ \\
\midrule
\multicolumn{3}{l}{\textit{Assessment environment}} \\
20 & Number of start states $K$ & $4$ \\
21 & Assessment horizon $H$ & $3$ \\
22 & Assessment discount $\gamma_{\text{ae}}$ & $1$ \\
\midrule
\multicolumn{3}{l}{\textit{Replay buffer}} \\
23 & Batch size & $128$ \\
24 & Maximum buffer size & $16{,}000$ transitions \\
25 & Offline episode store size & $1{,}000$ episodes \\
\midrule
\multicolumn{3}{l}{\textit{Value evaluator (transformer)}} \\
26 & Hidden dimension & $32$ \\
27 & Number of attention heads & $2$ \\
28 & Number of layers & $2$ \\
\midrule
\multicolumn{3}{l}{\textit{Policy and critic networks}} \\
29 & Hidden dimension & $64$ \\
30 & Number of hidden layers & $2$ \\
31 & Activation function & Tanh \\
\bottomrule
\end{tabular}
\caption{Hyperparameters for the dynamics mismatch experiment (Section~\ref{sec:mismatch}).}
\label{tab:hypers_mismatch}
\end{table}

\section{Assessment environment state instances}\label{sec:appendix_assessment_env_states}

\begin{figure}[H]
    \centering
    \begin{subfigure}[t]{.32\linewidth}
        \centering
        \includegraphics[width=.7\textwidth]{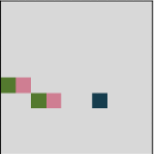}
        \caption{Asterix}
    \end{subfigure}
    \hfill
    \begin{subfigure}[t]{.32\linewidth}
        \centering
        \includegraphics[width=.7\textwidth]{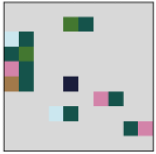}
        \caption{Freeway}
    \end{subfigure}
    \hfill
    \begin{subfigure}[t]{.32\linewidth}
        \centering
        \includegraphics[width=.7\textwidth]{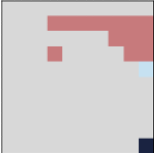}
        \caption{SpaceInvaders}
    \end{subfigure}
    \caption{Assessment environment start-states for Gymnax environments. In Asterix, a collector (black) must gather coins (green/pink) approaching from the left side of the screen. In Freeway, a chicken (black) crosses the freeway vertically while avoiding horizontally moving cars (dual pixels; green indicates the rear end). In SpaceInvaders, a protector (black) faces an alien spaceship (red) that has launched a fireball (white) toward it. These assessments evaluate how an RL policy performs in scenarios it may encounter during training. For instance, the Asterix agent is tested on whether it can capture two coins approaching from the left. The Freeway agent is tested on whether it can advance the chicken past traffic while the road is momentarily clear. The SpaceInvaders agent is tested on how it handles a fireball heading directly toward the protector.}
    \label{fig:dt_minatar}
\end{figure}

\clearpage
\section{Additional Experimental Results}\label{sec:appendix_addl_experimental_results}

\begin{figure}[H]            
    \centering
    \includegraphics[width=\linewidth]{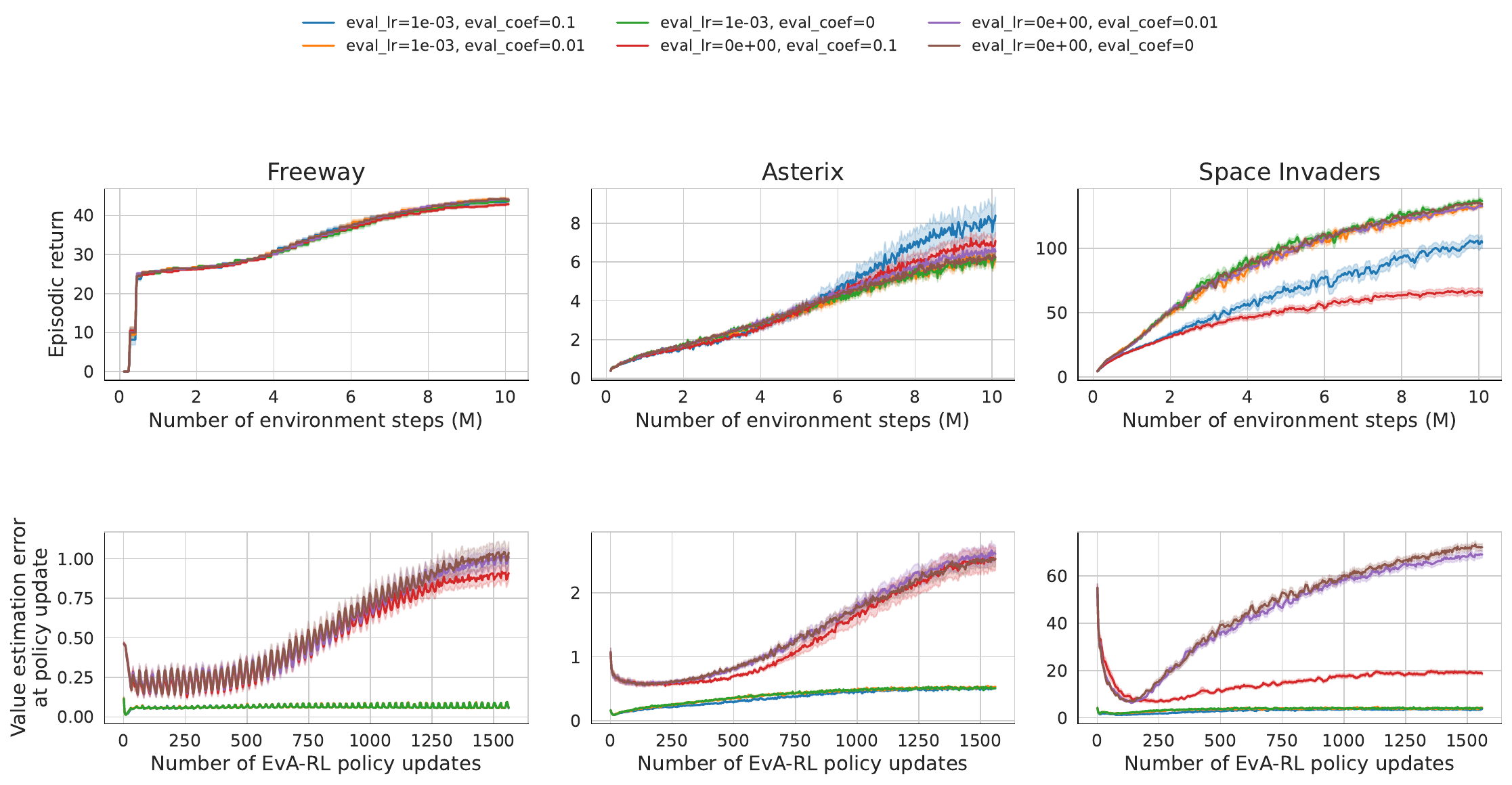}
    \caption{Learning curves for discrete-action environments. Mean and standard error of the episodic return and value evaluation error (computed immediately before each policy update) across 20 seeds for Asterix, Freeway, and SpaceInvaders. Learning curves span 10M environment interactions. In the legend, the evaluability coefficient varies among $\{0,\, 0.01,\, 0.1\}$ and the value evaluator learning rate is either $0$ or $10^{-3}$; a learning rate of $0$ indicates a frozen evaluator pre-trained on data from standard policy gradient RL, and an evaluability coefficient of $0$ corresponds to standard policy gradient RL. When the evaluator is co-learned alongside the policy, returns closely follow those of standard RL while evaluation errors are significantly lower. With a frozen evaluator, returns are lower than in the co-learned setting. The co-learned evaluator's error maintains a near zero value over the course of training, allowing co-learning to provide reliable evaluability feedback to the policy during evaluation-aware learning.}
    \label{fig:minatar_training_curves}
\end{figure}

\begin{figure}[H]            
    \centering
    \includegraphics[width=\linewidth]{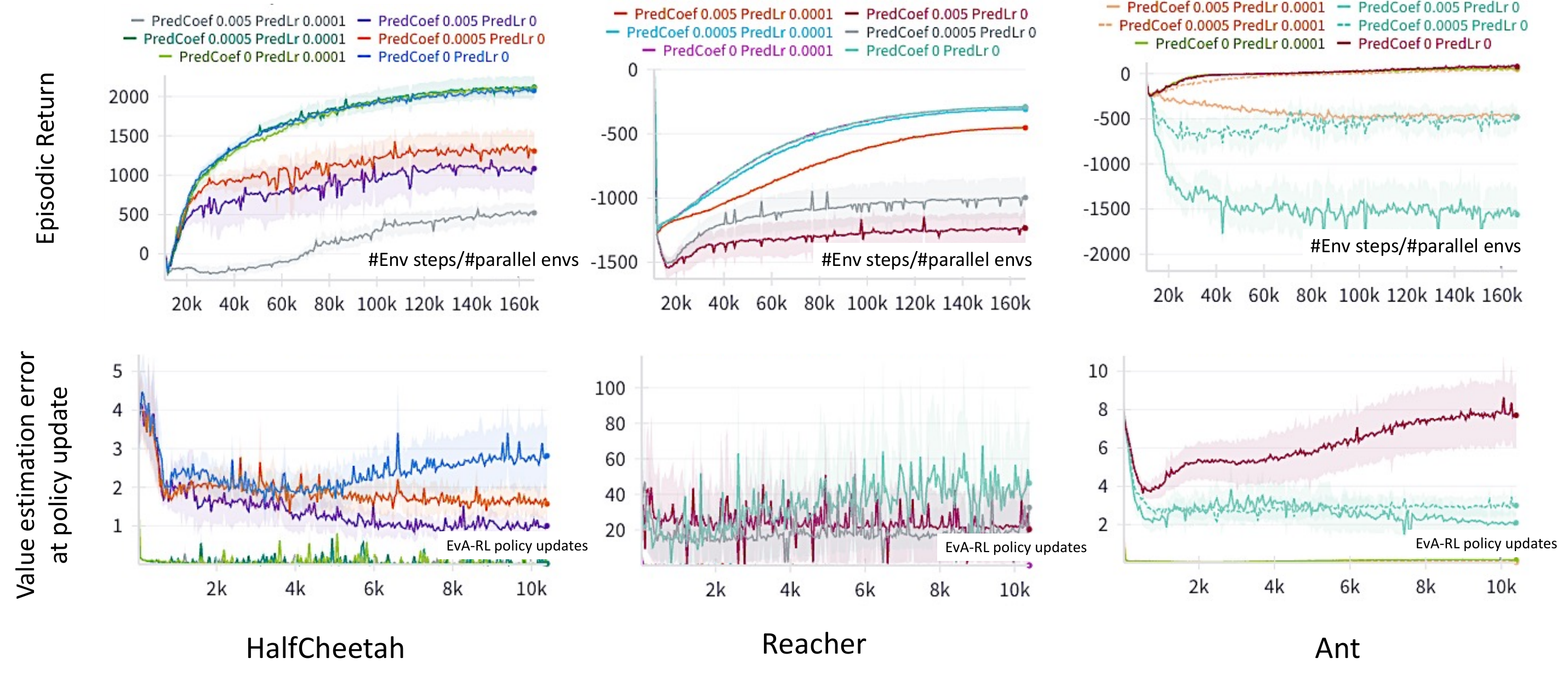}
    \caption{Training curves for continuous-action environments. Mean and standard error of the episodic return and value evaluation error (computed immediately before each policy update) across 5 seeds for the Brax environments of HalfCheetah, Reacher, and Ant (note that the $\beta=5\times 10^{-4}$ with learning rate of $10^{-4}$ numbers reported in the experiment section are still with 20 seeds). Each trial is performed on a single NVIDIA A100 80GB GPU. Learning curves span 10 million environment interactions. In the legend, `PredCoef' and `PredLr' denote the evaluability coefficient and value evaluator learning rate, respectively. The evaluability coefficient varies among $\{0,\, 5 \times 10^{-4},\, 5 \times 10^{-3}\}$ and the evaluator learning rate is either $0$ or $10^{-4}$; a learning rate of $0$ indicates a frozen evaluator pre-trained on data from standard policy gradient RL. The trends are consistent with the discrete-action results in Figure~\ref{fig:minatar_training_curves}, though these environments are more sensitive to changes in the evaluability coefficient.}
    \label{fig:brax_training_curves}
\end{figure}

\section{OPE Baseline Data}\label{sec:appendix_ope_data}

The OPE baselines are provided only with deployment trajectories and not assessment rollouts. Assessment trajectories are collected over short, fixed horizons without discounting ($\gamma_{\text{ae}} = 1$), producing truncated trajectories that are incompatible with importance-sampling-based estimators (TIS, PDIS, DR), which require full-horizon trajectories under a consistent discount factor to compute valid importance ratios. FQE similarly cannot use assessment data, as it fits Q-values via Bellman backups that assume the same discount factor used at deployment.


\end{document}